\documentclass[11pt]{article}
\usepackage[margin=1in]{geometry}
\usepackage{natbib}

\usepackage[utf8]{inputenc} 
\usepackage[T1]{fontenc}    
\usepackage{hyperref}       
\usepackage{url}            
\usepackage{booktabs}       
\usepackage{amsfonts}       
\usepackage{nicefrac}       
\usepackage{microtype}      
\usepackage{xcolor}         

\usepackage{amsthm}
\usepackage{thmtools}
\usepackage{thm-restate}

\usepackage{amsmath}
\usepackage{amsthm}
\usepackage{amssymb}
\usepackage{mathtools}
\usepackage{algorithm}
\usepackage{algpseudocode}
\usepackage{xspace}
\usepackage{cleveref}

\newtheorem{theorem}{Theorem}
\newtheorem{lemma}{Lemma}

\newtheorem{corollary}{Corollary}
\newtheorem{example}{Example}
\newtheorem{claim}{Claim}

\title{Computing Thiele Rules on \\Interval Elections and their Generalizations}

\author{%
\begin{tabular}{ccc}
Dimitris Avramidis & Alexandra Lassota & Ulrike Schmidt-Kraepelin \\
\texttt{d.n.avramidis@tue.nl} &
\texttt{a.a.lassota@tue.nl} &
\texttt{u.schmidt.kraepelin@tue.nl}
\end{tabular}
\\[1.5ex]
Department of Mathematics and Computer Science\\
Eindhoven University of Technology\\
Eindhoven, Netherlands
\\[2ex]
Adrian Vetta\\
School of Computer Science, and Department of Mathematics and Statistics\\
McGill University\\
Montréal, Canada\\
\texttt{adrian.vetta@mcgill.ca}
}

\date{}

\newcommand{\lc}{\ensuremath{\mathcal{A}^{\text{LC}}}}
\newcommand{\ci}{\ensuremath{\mathcal{A}^{\text{CI}}}}
\newcommand{\vi}{\ensuremath{\mathcal{A}^{\text{VI}}}}
\newcommand{\oned}{\ensuremath{\mathcal{A}^{\text{VCI}}}}
\newcommand{\tree}{\ensuremath{\mathcal{A}^{\text{TR}}}}
\newcommand{\ilpa}{\ensuremath{\text{ILP}_{(A,k,w)}}\xspace}
\newcommand{\lpa}{\ensuremath{\text{LP}_{(A,k,w)}}\xspace}
\newcommand{\lpap}{\ensuremath{\text{LP}_{(A',k',w')}}\xspace}

\definecolor{sagegreen}{RGB}{112, 130, 96}

\begin{document}

\maketitle

\begin{abstract}
Approval-based committee voting has received significant attention in the social choice community. Among the studied rules, \emph{Thiele rules}, and especially \emph{Proportional Approval Voting} (PAV), stand out for desirable properties such as proportional representation, Pareto optimality, and support monotonicity. Their main drawback is that computing a Thiele outcome is NP-hard in general.
A glimpse of hope comes from the fact that Thiele rules are better behaved under structured preferences. On the \emph{candidate interval} (CI) domain, they are computable in polynomial time via a linear program (LP) that has a totally unimodular constraint matrix. Surprisingly, this approach fails for the related \emph{voter interval} (VI) domain, and the complexity of the problem has repeatedly been posed as an open question. Our main result resolves this question: although the relevant matrix is not totally unimodular, the ``standard'' LP still admits at least one optimal integral solution, and we provide a fast algorithm for finding it. 

Our technique naturally extends to the \emph{voter-candidate interval} (VCI) domain, also known as the {\em 1-dimensional voter-candidate range} (1D-VCR) domain, and to the \emph{linearly consistent}~(LC) domain, both of which generalize the candidate and voter interval domains. Although both the VCI and LC domains have been studied in social choice, their relationship was unknown. We show, through connections to graph theory, that LC strictly contains VCI. We also provide an alternative definition of LC that is closer in spirit to VCI and has a natural interpretation in approval elections; this equivalence may be of independent interest. Finally, we study an alternative tree-based generalization of VCI and show that Thiele rules become NP-hard to compute on this domain.
\end{abstract}

\section{Introduction}\label{sec:intro}

Approval-based committee voting is a fundamental variant of multiwinner voting 
that has received significant attention in social choice theory in recent years 
\citep{lackner23abc_book}. In this setting, voters submit approval ballots over 
a set of candidates, and a voting rule selects a fixed-size subset of candidates, 
called a committee. Voters derive utility from each approved candidate included in 
the selected committee. Approval-based committee elections have real-world 
applications in proof-of-stake blockchain protocols 
\citep{boehmer2024approval,cevallos2021verifiably} and form an important special 
case of participatory budgeting \citep{aziz2020participatory}.

The social choice literature has developed a host of voting rules for 
approval-based committee elections, while also revisiting classical ideas from the 
origins of proportional representation. Among this broad range of rules, one class 
stands out: \emph{Thiele rules}, introduced by \citet{Thiele95}. Thiele rules are 
natural because they select committees that maximize a welfare objective, and, 
remarkably, they were first proposed in the nineteenth century. Formally, they 
maximize the sum of voter scores, where each voter's score depends only on the 
number of approved candidates contained in the committee. The class includes the 
well-known rules of \emph{approval voting} (AV), which maximizes the sum of 
total approvals; the \emph{Chamberlin--Courant} rule (CC), which maximizes the number of voters with at least one approved candidate; and \emph{proportional approval voting} (PAV), which seeks to 
balance these two objectives.

A central concern in approval-based committee elections is a fairness ideal known 
as \emph{proportional representation}, roughly the principle that every 
$\alpha$-fraction of the voters should be represented by an $\alpha$-fraction of 
the committee. The PAV rule has been repeatedly identified as performing 
particularly well in terms of proportionality: it satisfies \emph{extended 
justified representation} \citep{aziz2017justified}, achieves the best-possible 
\emph{proportionality degree} \citep{aziz2018complexity}, and provides a 
$2$-approximation to core stability \citep{peters2020proportionality}\footnote{There exist different notions of approximate core stability. The notion referenced here is 
defined with respect to a relaxation in the utility of the agents, not the size of 
the coalition.}, which is the best possible among rules satisfying the so-called 
\emph{Pigou--Dalton} principle. Furthermore, in the related setting where candidates 
may be selected multiple times, PAV in fact satisfies \emph{core stability} exactly 
\citep{brill2024approval} and is the only rule known to do so. 

Unfortunately, despite these desirable properties, Thiele rules have one 
significant practical drawback: essentially all of them (besides AV) are NP-hard 
to compute \citep{SkowronFL16_aij_set_of_items}. To address this, greedy variants 
\citep{aziz2017justified} and approximation algorithms \citep{dudycz2021tight} have 
been studied, though these relaxations do not preserve the 
strong normative properties discussed above. In this paper, we follow a third approach: we study 
Thiele rules under restricted domains. In particular, \citet{DBLP:conf/aaai/Peters18} 
showed that Thiele rules can be computed efficiently for so-called \emph{candidate 
interval}~(CI) elections, where candidates can be arranged on a line and each voter 
approves a consecutive subset of them. The proof proceeds by showing that an ILP 
formulation of the problem has a totally unimodular constraint matrix; consequently, an 
optimal solution can be found by solving the LP relaxation. Interestingly, this 
technique does not extend directly to the natural \emph{voter interval}~(VI) domain, 
where voters are arranged on a line and each candidate is approved by a consecutive 
subset of them. The complexity of computing Thiele rules in the voter interval 
domain was consequently repeatedly stated as an open question 
\citep{ElkindL15_ijcai15, DBLP:conf/aaai/Peters18,lackner23abc_book, lassota2026algorithms}. We resolve this question for voter interval elections and beyond. 

\paragraph{Our Contribution.}
Our main result, presented in \Cref{sec:CandInt}, shows that all Thiele rules can be computed in polynomial time for voter-candidate interval elections (VCI elections, also called \emph{1D-VCR} elections \citep{ELKIND2023114039}), a class of elections generalizing both VI and CI elections. To prove our main result, we revisit the ILP formulation of~\cite{DBLP:conf/aaai/Peters18} and show that the corresponding LP relaxation always admits at least one optimal solution that is integral. Our proof is constructive: in particular, we provide a polynomial-time algorithm that finds such an integral solution. When the weight function of the Thiele rule is strictly decreasing and positive, as is the case for PAV, we strengthen this result further by proving that every optimal extreme point of the LP is integral. Thus, in such instances, finding a basic optimal solution of the LP directly suffices to compute an outcome of the Thiele rule.

In \Cref{sec:LC}, we extend our main result beyond VCI elections. The key property underlying our main result is that, after removing dominated candidates from a VCI election, the resulting election is candidate-interval (CI). Consequently, our result applies to any class of elections with this property. We show that this includes the class of linearly consistent (LC) profiles, introduced by \cite{10.1007/978-3-031-22832-2_18}. Perhaps surprisingly, the relationship between the VCI and LC domains was previously open; we show that LC profiles in fact generalize VCI elections. To prove this, we leverage connections to interval graphs, which also yield an alternative interpretation of LC elections as \emph{interval containment} graphs. We find this formulation arguably more natural for approval elections than the original definition of LC, and therefore believe that the equivalence is of independent interest.

Finally, in \Cref{sec:tree}, we study the complexity of Thiele rules for an orthogonal generalization of the VCI domain: approval elections defined via a tree representation. In particular, we identify a boundary of tractability for Thiele rules by showing that the resulting decision problem is NP-hard.

\paragraph{Further Related Work.}
Thiele rules on restricted domains were first studied by \cite{ElkindL15_ijcai15}. They showed that PAV is fixed-parameter tractable for the VI and CI domains, both parameterized by the maximum number of approvals per voter or the maximum number of supporters of any candidate. Arguably simpler algorithms have been recently introduced by \cite{lassota2026algorithms} for the same parameterizations. In their paper, \cite{ElkindL15_ijcai15} also conjectured that the problem is NP-hard on both the VI and CI domains. This conjecture was partially disproved by \cite{DBLP:conf/aaai/Peters18}, who gave a linear programming formulation whose constraint matrix is totally unimodular on the CI domain. Ergo, the LP admits integral optimal solutions, which correspond to winning committees under Thiele rules. Unfortunately, \cite{DBLP:conf/aaai/Peters18} also showed that the analogous total-unimodularity argument fails for the VI domain. More recently, \cite{lassota2026algorithms} showed that Thiele rules can be computed efficiently when each candidate is approved by at most two voters, and gave an FPT algorithm parameterized by the total score of a winning committee. Other FPT results were achieved by \cite{SornatWX22_ijcai,BredereckF0KN20_aaai,FaliszewskiSST18_scw}. 

The tractability of Thiele rules on the voter interval (VI) domain was concurrently and independently resolved by \cite{manurangsi2026polynomial}. Their result, obtained via human--AI collaboration, uses binary search over the multiplier of a Lagrangian relaxation of the LP relaxation of \cite{DBLP:conf/aaai/Peters18}. While we also use the LP relaxation by \cite{DBLP:conf/aaai/Peters18} as our starting point, we do not use the Lagrangian relaxation and instead directly establish the existence of integral optimal solutions. Consequently, we find our algorithm and our arguments more straightforward and simpler. Moreover, our technique applies directly to all restricted domains satisfying the aforementioned key property. It therefore immediately extends to two superclasses of voter interval domains: the VCI and the LC domains.

\section{Preliminaries}\label{sec:prelim}

We are given $n \in \mathbb{N}$ voters and $m \in \mathbb{N}$ candidates and denote the corresponding sets by $[n]$ and $[m]$, respectively, where $[k] = \{1, \dots, k\}$ for every integer $k \in \mathbb{N}$. 
For each voter $i \in [n]$, we define its set of approved candidates by $C_i \subseteq [m]$. For each candidate $j \in [m]$, we define its set of supporters as $N_j = \{i \in [n] \mid j \in C_i\}$. For convenience, we also capture this information via an \emph{approval matrix} defined as the $n \times m$ matrix $A$ with binary entries $A[i][j] = 1$ if $j \in C_i$ and $0$ otherwise. In addition, we are given an integer $k \in \mathbb{N}$, which we call the \emph{committee size}, and any subset $W \subseteq C$ with $|W|=k$ is called a committee. An approval-based committee voting election now consists of the approval matrix and committee size, i.e., $(A,k)$. 
We say that candidate $j$ \emph{dominates} candidate $j'$ if $N_{j'} \subset N_j$. We call a candidate $j$ \emph{undominated} if no other candidate $j'$ dominates it and we call an approval matrix $A$ \emph{domination-free} if all candidates are undominated. 

\paragraph{Restricted Domains.}
In this paper, we study approval-based committee elections under structural assumptions about the approval matrix. The following three subclasses will be of relevance: 
\begin{itemize}
\item The class of \emph{candidate-interval} matrices $\ci$ containing all $0$-$1$-matrices for which we can reorder the columns such that all $1$-entries in each row form an interval. The name comes from the fact that these matrices correspond to elections with the candidate-interval property, as defined by \cite{ElkindL15_ijcai15}. 
\item The class of \emph{voter-interval} matrices $\vi$ containing all matrices for which we can reorder the rows such that all $1$-entries in each column form an interval. The name comes from the fact that these matrices correspond to elections with the voter-interval property, as defined by \cite{ElkindL15_ijcai15}. 
\item The class of \emph{voter-candidate-interval (VCI)} matrices $\oned$ containing all matrices for which there exists a so-called \emph{1D-voter-candidate range} explanation. That is, we can assign each voter and each candidate an interval on the real line $\mathbb{R}$, such that, for every voter $i$ and candidate $j$ it holds that $A[i][j] = 1$  if and only if the two corresponding intervals intersect. Such elections have been studied, for example, by \cite{GodziszewskiB0F21_vci_aaai21} and \cite{dong25interlacing}.
\end{itemize} 
It has been shown that $\ci \subseteq \oned$ and $\vi \subseteq \oned$, but neither $\ci \not\subseteq \vi$ nor $\vi \not\subseteq \ci$. 

There are two broader subclasses of particular relevance for this work, namely the \emph{linearly consistent} domain and the \emph{tree representation} domain. We defer the definitions of these domains to~\Cref{sec:LC} and \Cref{sec:tree}, respectively, when they become relevant.

\paragraph{Thiele Rules.}

Thiele rules refer to a class of voting rules using an optimization approach, introduced by \cite{Thiele95}. Given a typically non-increasing weight vector $w=(w_1, \dots, w_k)$, a committee $W$ is winning under the $w$-Thiele rule, if it maximizes the following score 
\begin{equation}
\sum_{i\in[n]} \sum_{\ell=1 }^{|W \cap C_i|} w_{\ell}. \label{pav-score}
\end{equation}
Three classical weight vectors considered in the literature are 
(i) $(1,1,\dots,1)$ leading to the \emph{approval voting (AV)} rule,
(ii) $(1,0,\dots,0)$ leading to the \emph{Chamberlin-Courant (CC)} rule, and (iii)~$(1,\frac{1}{2},\frac{1}{3}, \dots, \frac{1}{k})$, leading to the \emph{proportional approval voting (PAV)} rule. 
The NP-hardness of CC has been shown by \cite{procaccia2008complexity}, and the NP-hardness of PAV has been shown by \cite{SkowronFL16_aij_set_of_items} and \cite{AzizGGMMW15} (for different decision problems). In the following, we generalize this definition by considering personalized weight vectors for each voter $i$, i.e., $w^i=(w^{i}_1, w^{i}_2, \dots, w^{i}_k)$. We study the {\em committee winner determination problem}: given an approval election $(A,k)$ and weight functions $w=(w^i)_{i \in [n]}$, find a committee $W$ that maximizes the score~(\ref{pav-score}).
We refer to $(A,k,w)$ as an instance of the problem.  %

\section{A Polynomial Time Algorithm for Generalized Thiele Voting Rules in the Voter-Candidate Interval Domain}\label{sec:CandInt}

In this section, we present our main result, a polynomial time algorithm for the committee winner determination problem in the voter-candidate interval (VCI) domain.
Specifically we construct an algorithm that, for any set of non-negative and non-increasing personalized weight vectors $w = (w^i)_{i \in [n]}$ with $w_1^i \geq w_2^i \geq \dots \geq w_k^i\geq 0$, finds a committee winning under $w$-Thiele.
Our algorithm builds upon the following LP-formulation, which has already been used by \cite{DBLP:conf/aaai/Peters18} to show that $w$-Thiele rules can be computed in polynomial time for the candidate-interval (CI) domain. 

\paragraph{The Integer Linear Program.}
Let $x_j \in \{0,1\}$ indicate whether candidate $j$ is selected. Further, let $y^i_\ell \in \{0,1\}$ denote whether voter $i$ approves at least $\ell$ selected candidates. For a given instance~$(A,k)$ and weight functions~$w$, we define $\ilpa$:
\begin{align}
\text{maximize} \quad & \sum_{i=1}^{n} \sum_{\ell=1}^{k} w^{i}_\ell \cdot y^i_\ell \label{const1}\\
\text{subject to} \quad 
& \sum_{j = 1}^m x_j = k \label{const2}\\
& \sum_{j \in C_i} x_j = \sum_{\ell = 1}^k y^i_\ell \quad \forall i \in [n] \label{const3}\\
& x_j \in \{0,1\} \quad \forall j \in [m]\\
& y^i_\ell \in \{0,1\} \quad \forall i \in [n], \ell \in [k]
\end{align}

Constraint~\ref{const2} ensures that exactly $k$ candidates are selected. 
Since the personalized weight vectors are non-increasing,
Constraint~\ref{const3} enforces consistency between the number of selected candidates approved by voter $i$ and the auxiliary variables $y^i_\ell$. This ensures that $y^i_\ell$ correctly captures whether voter $i$ has at least $\ell$ approved candidates in the committee. The objective function~\ref{const1} simply sums the contribution of each voter according to the number of approved candidates they receive. 

We refer to the LP-relaxation of \ilpa as \lpa. Clearly, if an optimal solution to \lpa  is integral, it is also an optimal solution to \ilpa. The following result is due to \cite{DBLP:conf/aaai/Peters18}: 

\begin{theorem}[\cite{DBLP:conf/aaai/Peters18}] \label{lem:dominik}
    For a given instance $(A,k)$ and weight functions $w$ for which $A \in \ci$, it holds that the constraint matrix of \lpa is totally unimodular. Hence, every optimal extreme point of \lpa is integral and such a point can be found in polynomial time. 
\end{theorem}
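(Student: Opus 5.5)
We prove total unimodularity of the constraint matrix of \lpa by exhibiting it as an interval (consecutive-ones) matrix up to signs and column operations, using Ghouila-Houri's characterization or the network-matrix route. Fix the column order of $A\in\ci$ so that each $C_i$ is an interval $\{a_i,\dots,b_i\}$ of candidates. The equality system consists of the row $\sum_j x_j = k$ and, for each voter $i$, the row $\sum_{j\in C_i} x_j - \sum_{\ell} y^i_\ell = 0$. The bound constraints $0\le x_j\le 1$ and $0\le y^i_\ell\le 1$ add identity rows, which preserve total unimodularity. So it suffices to show that the matrix $M$ formed by the $n+1$ equality rows over the columns $(x,y)$ is totally unimodular.

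\textbf{Step 1.} Apply Ghouila-Houri: $M$ is TU iff every subset $R$ of rows can be split into $R^+\cup R^-$ such that, for every column, the sum of entries in $R^+$ minus the sum in $R^-$ lies in $\{-1,0,1\}$.

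\textbf{Step 2.} Handle the $y$-columns. Each $y^i_\ell$ column has a single nonzero entry, a $-1$ in row $i$, so any signing gives a column sum in $\{-1,0,1\}$. These columns can be ignored.

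\textbf{Step 3.} Handle the $x$-columns. Restricted to $x$, every row is a $0$-$1$ row whose ones are consecutive: the $k$-row is the full interval $[1,m]$, and voter row $i$ is $[a_i,b_i]$. Hence the $x$-part is an interval matrix in the row sense, whose transpose has the consecutive-ones property in columns. Interval matrices are TU by the standard argument, and they satisfy Ghouila-Houri directly. Given a set $R$ of interval rows, sort them by left endpoint and greedily alternate signs along chains of overlapping intervals. Equivalently, subtract consecutive columns to obtain a network matrix: replace column $x_j$ by $x_j-x_{j-1}$, which is a unimodular change of variables. Each interval row becomes a row with one $+1$ and one $-1$ (or a single entry at the boundary), i.e., a transposed incidence matrix of a directed graph, which is TU.

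\textbf{Step 4.} Combine. Since the $y$-columns are unit columns (up to sign), appending them to a TU matrix keeps it TU: expanding any square submatrix along such a column reduces to a smaller submatrix. The equality right-hand sides $k$ and $0$ and the bounds $0,1$ are integral. Hence every extreme point of the polytope, in particular every optimal extreme point, is integral. An optimal extreme point can be found in polynomial time by solving the LP with an algorithm that returns basic solutions, for example the ellipsoid or interior-point method followed by crossover.

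The only step needing care is Step 3: verifying that interval matrices are TU. This is classical, and citing it should suffice. If a self-contained argument is needed, the difference transformation to a network matrix is the cleanest route, with the all-ones row $[1,m]$ mapping to a single entry in the first column.
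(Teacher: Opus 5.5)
The paper gives no proof of this statement. It cites Peters (2018) and adds only that personalized weights change nothing; your argument covers that automatically, since $w$ enters only the objective and not the constraint matrix. Your route is the standard argument behind that citation: the $x$-part has consecutive ones in every row (the cardinality row is the full interval), each $y^i_\ell$-column has a single nonzero entry, and the bounds add identity rows. Combined with the classical fact that interval matrices are totally unimodular, it is complete, and so are the integrality and polynomial-time conclusions.

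Two of the optional self-contained justifications in Step 3 are wrong as written, so either rely on the citation or repair them.

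\textbf{Row signing.} Alternating signs in left-endpoint order does not satisfy the row version of Ghouila-Houri. For the intervals $[1,10],[2,3],[5,6]$, the signs $+,-,+$ give signed coverage $2$ at position $5$. The easy Ghouila-Houri argument is the column version: for any subset of $x$-columns, alternate signs in the CI order. Each row then sums an alternating consecutive block, so its value lies in $\{-1,0,1\}$.

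\textbf{Differencing.} Total unimodularity is not invariant under unimodular column operations. The row $(1\;\;2)$ is not TU, yet replacing its second column by the difference of the two columns gives the TU row $(1\;\;1)$. So showing that the differenced matrix is a transposed incidence matrix does not by itself transfer TU back to the original matrix. The fix is to difference each square submatrix separately:
\begin{itemize}
\item deleting rows and columns preserves consecutive ones in rows;
\item the differencing matrix is triangular with unit diagonal, so it preserves the determinant;
\item a square matrix with at most one $+1$ and one $-1$ per row has determinant in $\{0,\pm1\}$.
\end{itemize}
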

We remark that \cite{DBLP:conf/aaai/Peters18} studied the case of identical weight vectors, but the result easily extends to
personalized weight vectors.
Given the above result, it is natural to ask whether a similar result holds for the classes $\vi$ and $\oned$. Unfortunately this is not the case, as already observed by \cite{DBLP:conf/aaai/Peters18}. For completeness, we provide an example below which we will use again later: 

\begin{example}\label{ex1}
    Consider the following approval matrix: 
    \[A = \begin{pmatrix}
    1 & 1 & 0 & 0\\
    1 & 0 & 1 & 0\\
    1 & 0 & 0 & 1
    \end{pmatrix}
    \]
    Observe that $A \in \vi$, and $A \not\in\ci$. If we add an all-ones row to the matrix, the extended matrix has determinant $-2$. Moreover this extended matrix appears as a submatrix in the constraint matrix of \lpa. Hence, the constraint matrix of \lpa is not totally unimodular. 
\end{example}

However, perhaps surprisingly, we can show that \lpa has at least one optimal solution that is also integral. Furthermore, such a solution can be found in polynomial time by Algorithm~\ref{alg:main}, stated below. This yields our main result: 
\begin{theorem}\label{thm:main}
    For any instance $(A,k)$ with $A \in \oned$ and any non-increasing weight functions $w$, \lpa has an optimal solution that is integral, which Algorithm \ref{alg:main} finds in polynomial time. 
\end{theorem}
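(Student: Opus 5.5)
The plan is to reduce to the candidate-interval case of \Cref{lem:dominik}. The structural fact I will rely on, and prove first as a lemma, is that deleting all dominated candidates from a VCI matrix yields a $\ci$ matrix. Suppose that, after deletion, the intervals $I_j$ of the surviving candidates are pairwise non-nested with respect to which voter intervals they meet. Order these candidates by the left endpoints of their intervals; then the right endpoints are also sorted in this order. For a voter interval $J_i$, the candidates it meets are those with right endpoint $\ge$ left$(J_i)$ and left endpoint $\le$ right$(J_i)$. This is a prefix condition intersected with a suffix condition, so the approved set is consecutive. Some care is needed here, because undominated only means that $N_j$ is not a strict subset of another $N_{j'}$, not that the intervals are non-nested. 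So I would first replace each candidate interval by a canonical one (for example, the hull of the intersections with its supporters' intervals), and treat candidates with identical $N_j$ as copies of one another. Copies can be placed adjacently and keep the interval property.

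The algorithm then has to deal with dominated candidates, which the LP may still use fractionally. Let $j'$ be dominated by $j$, with $N_{j'}\subset N_j$. Take any optimal solution $(x,y)$ of \lpa and move mass $\delta$ from $x_{j'}$ to $x_j$. Every voter's $\sum_{j\in C_i}x_j$ then weakly increases. Because the weights are non-increasing and non-negative, we can refill the $y^i_\ell$ greedily (fill $y^i_1$ first, then $y^i_2$, and so on). The objective then weakly increases, since the optimal $y$ for a given total is the greedy filling.

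The exchange is only possible while $x_j<1$. So the algorithm processes the domination partial order from the top. It solves the LP restricted to the undominated candidates with budget $k'=\min(k,\#\text{undominated})$. If $k$ exceeds the number of undominated candidates, it forces all of them to $1$ and recurses on the remaining candidates with $k-k'$, using updated residual weights $w'$. These are shifted weight vectors for each voter, still non-increasing, which is why personalized weights are needed. The residual matrix $A'$ is a column-submatrix of a VCI matrix, hence still VCI.

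Correctness argument. First, show that some optimal LP solution puts all its mass on undominated candidates, or saturates them. This uses the exchange step iterated along chains of dominance, and it terminates because dominance is a strict partial order. Second, restricted to those columns, the matrix is $\ci$, so \Cref{lem:dominik} gives an integral optimal extreme point of \lpap. Finally, concatenate the pieces of the recursion; the LP optimum value decomposes in the same way.

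The main obstacle I expect is the saturation case, where there are fewer than $k$ undominated candidates. There I must show that fixing them all to $1$ loses no optimality even fractionally, and that the LP on the residual instance with weights $w'$ equals the original LP value minus the contribution fixed so far. For this I would argue directly via the exchange: any fractional solution with some undominated $x_j<1$ and positive mass on a candidate it dominates can be improved weakly, and candidates not dominated by any unsaturated candidate are handled by induction on the dominance order. Polynomial time follows since each round fixes at least one candidate and solves one LP.
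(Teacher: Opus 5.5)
\textbf{There is a genuine gap.} The layer-by-layer reduction at the heart of your algorithm is false. Your key claim is that some optimal LP solution ``puts all its mass on undominated candidates, or saturates them''. This does not follow from your exchange step. The exchange only moves mass from $j'$ to a candidate $j$ with $N_{j'}\subset N_j$. Once all dominators of $j'$ are at $1$, $j'$ may legitimately carry mass while undominated candidates that do \emph{not} dominate $j'$ stay unsaturated.

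Concretely, take four voters with $N_a=\{1,2,3\}$, $N_b=\{1,2\}$, $N_c=\{4\}$, $k=2$ and AV weights. This instance is VCI: place the voters at the points $0,1,2,10$ and use $I_a=[0,2]$, $I_b=[0,1]$, $I_c=[10,10]$.
\begin{itemize}
\item The undominated candidates are $a$ and $c$, so your algorithm returns $\{a,c\}$ with score $4$, whereas $\{a,b\}$ scores $5$.
\item The LP objective is $3x_a+2x_b+x_c$, so $x_a=x_b=1$ is the \emph{unique} LP optimum, and no optimal solution is supported on undominated candidates.
\item With $N_a=\{1,2,3,4\}$, $N_b=\{1,2,3\}$, $N_c=\{5\}$ the same happens for PAV: $5.5$ versus $5$.
\end{itemize}
The saturation branch fails in the same way. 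With $N_a=\{1,\dots,4\}$, $N_b=\{1,2,3\}$, $N_{b'}=\{1,2\}$, $N_c=\{5\}$, $k=3$ and AV, forcing $a$ and $c$ to $1$ yields at most $8$, while $\{a,b,b'\}$ yields $9$. So the LP value does not decompose along your recursion. Dominance only tells you that $j'$ is worse than its own dominators, not that it is worse than arbitrary undominated candidates.

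\textbf{The missing idea.} Let the LP solution, not the dominance order, decide which candidates get fixed, and require domination-freeness only among the candidates that remain fractional. The paper proceeds as follows.
\begin{itemize}
\item It solves the full \lpa.
\item It performs exchanges only along pairs with $x_j<1$, $x_{j'}>0$ and $N_{j'}\subset N_j$, until no such pair remains.
\item It fixes every candidate with $x_j\in\{0,1\}$. The remaining strictly fractional candidates $C'$ are pairwise non-dominating, since any dominance among them would still admit an exchange.
\item Your lemma (whose sketch is fine) then applies to the VCI column submatrix on $C'$, which is therefore CI.
\item The fractional part $\hat x$ is feasible for the residual LP with shifted weights $w'$. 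This gives $f(x)\le f(W_1)+f'(x')$, and the integral optimum $x'$ from \Cref{lem:dominik} completes an integral solution attaining the LP optimum.
\end{itemize}

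Two further points. First, the theorem is about \Cref{alg:main} specifically, whereas you analyze a different procedure. Second, bounding the number of exchange steps needs more than the fact that dominance is a strict partial order. The paper always picks a pair maximizing $|N_j\setminus N_{j'}|$, which makes the number of feasible recipient--donor pairs strictly decrease.
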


Algorithm~\ref{alg:CandInt} proceeds in two key steps. First, we solve \lpa to obtain a (possibly fractional) optimal solution. Then, candidates that receive integral values in this solution are fixed: those with value~$1$ are included in the committee, while those with value~$0$ are excluded.
The remaining candidates, i.e., those assigned strictly fractional values, induce a subinstance. We will show that, after the modification performed in \Cref{alg:main}, this residual instance corresponds to a domination-free approval matrix. Using an observation by \cite{dong25interlacing}, such approval matrices are included in the candidate-interval subdomain: 

\begin{lemma}[\cite{dong25interlacing}]\label{lem:chris}
If $A \in \oned$ is a domination-free approval matrix then $A \in \ci$. 
\end{lemma}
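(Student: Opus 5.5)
The plan is to exhibit an explicit column order directly from a 1D-voter-candidate range explanation of $A$, and to show that any violation of the interval property in that order forces a domination. Fix intervals $I_i=[s_i,t_i]$ for every voter $i$ and $J_j=[l_j,r_j]$ for every candidate $j$ such that $A[i][j]=1$ if and only if $I_i\cap J_j\neq\emptyset$. I take closed intervals; open or half-open ones only change strict versus non-strict inequalities below. Order the candidates by left endpoint $l_j$, breaking ties arbitrarily. I claim that in this order every row's $1$-entries are consecutive, so that $A\in\ci$.

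The first step is a simple containment observation. If $J_b\subseteq J_a$, then every voter interval meeting $J_b$ also meets $J_a$, so $N_b\subseteq N_a$. Since $A$ is domination-free, this inclusion cannot be strict. Hence if additionally some voter meets $J_a$ but not $J_b$, we obtain a contradiction.

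The main step is to suppose that some voter $v$ approves candidates $a$ and $c$ with $l_a\le l_c$, and that some $b$ lying strictly between them in the order (so $l_a\le l_b\le l_c$) is not approved by $v$. Then $J_b\cap I_v=\emptyset$, which splits into two cases.
\begin{itemize}
\item If $l_b>t_v$, then $l_c\ge l_b>t_v$. So $J_c$ lies entirely to the right of $I_v$, contradicting $v\in N_c$.
\item If $r_b<s_v$, then $J_a$ meets $I_v$, which gives $r_a\ge s_v>r_b$. Together with $l_a\le l_b$ this yields $J_b\subseteq J_a$, hence $N_b\subseteq N_a$. Moreover $v\in N_a\setminus N_b$, so $N_b\subsetneq N_a$, meaning $a$ dominates $b$. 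This contradicts domination-freeness.
\end{itemize}
Both cases are impossible, so each $C_i$ is an interval in this order.

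I do not expect a real obstacle. The only points needing care are ties in the left endpoints and the endpoint conventions. Ties are harmless because only the inequality $l_a\le l_b\le l_c$ is used. Endpoint conventions affect only strictness, and strictness in the second case is supplied by the voter $v$ itself rather than by the intervals. Note also that candidates with identical supporter sets are permitted by the definition of domination-free, and the argument never needs to exclude them.
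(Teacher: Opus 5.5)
Your argument is correct. The paper does not prove \Cref{lem:chris} itself; it imports it from Dong et al. Its own route to the statement is implicit in Section~\ref{sec:LC}, in two steps. First it shows $\oned\subseteq\lc$ by sorting voters and candidates jointly by left endpoint. Then it proves the ``domination-free implies $\ci$'' lemma for all of $\lc$ by a purely combinatorial argument: given a gap $a\succ b\succ c$ in row $i$, it takes an arbitrary supporter $j$ of $b$ and applies the LC exchange condition. The pair $(b,c)$ rules out $i\succ j$, and the pair $(a,b)$ gives $A[j][a]=1$ when $j\succ i$.

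Composing the two steps yields exactly your column order (candidates by left endpoint) and the same dominating candidate $a$. You instead split on which side of $I_v$ the interval $J_b$ lies. The right-hand case contradicts $v\in N_c$ at once. The left-hand case gives the nesting $J_b\subseteq J_a$, which delivers $N_b\subseteq N_a$ for all supporters at once rather than voter by voter. This is shorter, self-contained, and gives a clean geometric reason for the gap.

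What the paper's detour buys is generality. Its combinatorial argument covers every LC matrix, a strictly larger class by Theorem~\ref{LC not oned}, and that is exactly what \Cref{thm:main2} needs. Your argument uses the intersection model essentially and would have to be redone (e.g.\ on the VCCI model of \Cref{thm:equi-LC-VCCI}) to reach LC.

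One small correction to your closing remarks. With mixed endpoint types it is not true that only strictness changes and that ties are harmless. Take $I_v=[0,1]$, $J_a=[0,\tfrac12]$, $J_b=(1,2]$, $J_c=[1,1]$, plus voters $[\tfrac15,\tfrac3{10}]$, $[\tfrac32,\tfrac85]$ and $[1,1]$. The matrix is domination-free, yet breaking the tie $l_b=l_c$ as $b$ before $c$ leaves a gap in $v$'s row. This does not affect your proof, since any finite family of intervals can be replaced by closed ones with the same intersection pattern; the paper makes the same assumption in Section~\ref{sec:LC}. Just state that reduction explicitly instead of claiming the argument survives verbatim.
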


Thus, such instances can be solved using \Cref{lem:dominik}. Ultimately, our proof shows that following this two-step procedure induces an optimal and integral solution of \lpa, and thus an optimal solution to \ilpa. 

\begin{algorithm}
\caption{Thiele Rules on the Voter-Candidate Interval Domain}
\label{alg:CandInt}
\begin{algorithmic}[1]
\Require Voters $[n]$, candidates $[m]$, approval matrix $A$, weight vectors $(w^i)_{i \in [n]}$, committee size $k$
\Ensure A committee $W \subseteq [m]$ of size $k$
\State Solve \lpa to obtain an optimal, potentially fractional solution $(x, y)$ \label{line:LP}
\While{there exists $j,j' \in [m]$ such that $x_j < 1$, $x_{j'}>0$  and $j$ dominates $j'$}
\State Shift weight from $x_{j'}$ to $x_j$ until either $x_j = 1$ or $x_{j'} = 0$  \label{line:shift}
\EndWhile
\State Set $W_0 = \{j \in [m] \mid x_j = 0\}$, $W_1 = \{j \in [m] \mid x_j = 1\}$, $C' = [m] \setminus \{W_0 \cup W_1$\}, and $k' = k - |W_1|$
\State Set $\ell_i = |C_i \cap W_1|$ and define $w'^i_\ell = w^i_{\ell + \ell_i}$ for all $i \in [n], \ell \in [k']$
\State Set $A'$ to be the submatrix of $A$ induced by the candidates (i.e. columns) $C'$
\State Solve the instance $(A',k',w')$ via \Cref{lem:dominik}, obtain $(x',y')$ and set $W' = \{j \in C' \mid x'_j = 1\}$ \label{line:residual}
\State \Return $W_1 \cup W'$
\end{algorithmic}
\label{alg:main}
\end{algorithm}

Before proving our main result, two comments are in order here. First, the while loop of Algorithm~\ref{alg:CandInt}
is not fully specified. To ensure a polynomial running time, as we explain later, given multiple feasible choices we must judiciously select the pair $\{j,j'\}$. 
Second, the restriction to matrices in the voter-candidate interval domain $\oned$ is not necessary.
As we show in Section~\ref{sec:LC}, the algorithm succeeds for the linearly consistent (LC) domain
and, indeed, for any domain that satisfies an analogue of
Lemma~\ref{lem:chris}.

\begin{proof}[Proof of \Cref{thm:main}]
Consider an optimal fractional solution $(x,y)$ to \lpa. We start with two structural observations about $(x,y)$ in Claims~\ref{claim-y} and \ref{claim:shiftingX}. These observations also establish stronger statements that hold under stronger assumptions on $w$. While these stronger versions are not strictly necessary for the present proof, they arise naturally and will be used later in the proof of \Cref{thm:extreme}.
We first show that we can assume w.l.o.g. that $y$ has a simple form: 
\begin{claim}
\label{claim-y}
    For any feasible solution $(x,y)$, the following $y^*$ leads to an objective value of $(x,y^*)$ in \lpa that is at least as high as $(x,y)$: 
    $$y^{*i}_{\ell} = \begin{cases} 1 & \text{for } \ell \leq \lfloor \sum_{j \in C_i} x_j\rfloor \\ \sum_{j \in C_i} x_j - \lfloor \sum_{j \in C_i} x_j\rfloor & \text{for } \ell = \lfloor \sum_{j \in C_i} x_j\rfloor  +1\\ 0 & \text{for } \ell > \lfloor \sum_{j \in C_i} x_j\rfloor + 1 \end{cases}$$
    Moreover, when all weight functions $(w^i)_{i \in [n]}$ are strictly decreasing, then either $y=y^*$ or the objective value of $(x,y^*)$ in \lpa is strictly higher than that of $(x,y)$.
\end{claim}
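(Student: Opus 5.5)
Fix a voter $i$ and write $s_i = \sum_{j \in C_i} x_j$. The constraints only couple $y^i$ to $x$ through constraint~(\ref{const3}) and the box constraints $0 \le y^i_\ell \le 1$, and the objective is separable across voters. So I will treat each voter separately: maximize $\sum_{\ell=1}^k w^i_\ell y^i_\ell$ over $y^i \in [0,1]^k$ with $\sum_\ell y^i_\ell = s_i$, and show that $y^{*i}$ is an optimum of this subproblem.

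First, $y^*$ is feasible. Since $x$ is feasible, we have $0 \le s_i \le |C_i|$. Moreover, the original $y^i$ satisfies $\sum_\ell y^i_\ell = s_i$ with $y^i_\ell \le 1$, which gives $s_i \le k$. Hence $y^{*i}$ is well defined: at most $k$ positions are used, and if $s_i = k$ the ``$+1$'' entry equals $0$ and can be dropped. Its entries lie in $[0,1]$ and they sum to $s_i$, so constraint~(\ref{const3}) holds.

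Second, $y^{*i}$ is optimal, by an exchange argument that uses the fact that $w^i$ is non-increasing. Take any feasible $y^i \neq y^{*i}$. Then there exist indices $\ell < \ell'$ with $y^i_\ell < 1$ and $y^i_{\ell'} > 0$: if no such pair existed, $y^i$ would be of the form ``ones, then one fractional entry, then zeros'', and with sum $s_i$ this is exactly $y^{*i}$. Move $\delta = \min(1 - y^i_\ell,\, y^i_{\ell'}) > 0$ of mass from position $\ell'$ to position $\ell$. This preserves feasibility and changes the objective by $\delta(w^i_\ell - w^i_{\ell'}) \ge 0$.

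To avoid an infinite process, I will argue via a potential rather than by iterating shifts. The cleaner route is a majorization statement. For every $t$, the prefix sum $\sum_{\ell \le t} y^{*i}_\ell = \min(t, s_i)$ is at least $\sum_{\ell \le t} y^i_\ell$, and the two total sums are equal. By Abel summation,
\[
\sum_\ell w^i_\ell (y^{*i}_\ell - y^i_\ell) = \sum_{t=1}^{k-1} (w^i_t - w^i_{t+1}) \, D_t + w^i_k D_k,
\]
where $D_t$ is the difference of the prefix sums. Every term is non-negative because $w^i$ is non-increasing, $D_t \ge 0$ for all $t$, and $D_k = 0$. Summing over all voters gives the first part of the claim.

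For the strict part, assume every $w^i$ is strictly decreasing and $y^i \ne y^{*i}$. Let $t$ be the first index where $y^i$ and $y^{*i}$ differ. Since $y^{*i}$ front-loads mass, $y^i_t < y^{*i}_t$, so $D_t > 0$ while $t < k$, and the coefficient $w^i_t - w^i_{t+1}$ is strictly positive. (The case $t = k$ cannot happen, because the total sums agree.) Hence that voter's objective strictly increases, and so does the total objective.

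The main obstacle is modest: justifying $D_t \ge 0$. It follows because each $y^i_\ell \le 1$ gives $\sum_{\ell\le t} y^i_\ell \le t$, and non-negativity gives $\sum_{\ell\le t} y^i_\ell \le s_i$.
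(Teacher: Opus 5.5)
Your proof is correct, but it reaches the conclusion by a different route than the paper.

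The paper uses only the pairwise exchange you sketch in your second step. While $y^i \neq y^{*i}$, there are indices $\ell<\ell'$ with $y^i_\ell<1$ and $y^i_{\ell'}>0$. Moving mass from $\ell'$ to $\ell$ does not decrease the objective, and strictly increases it when $w^i$ is strictly decreasing. The paper then says ``the claim follows''. It does not spell out that the shifts can be organized to reach $y^{*i}$ in finitely many steps. For instance, one can always take the smallest $\ell$ with $y^i_\ell<1$ and the largest $\ell'$ with $y^i_{\ell'}>0$, and shift $\min(1-y^i_\ell,\,y^i_{\ell'})$. Each step then saturates a variable, so at most $k$ steps are needed per voter.

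You replace the iteration with a one-shot comparison. The prefix sums $\min(t,s_i)$ of $y^{*i}$ dominate those of any feasible $y^i$. Summation by parts then writes the gain as $\sum_{t<k}(w^i_t-w^i_{t+1})D_t$ with every $D_t\ge 0$. This gives you three things:
\begin{itemize}
\item an argument with no termination question at all;
\item an explicit formula for the improvement;
\item an immediate proof of the strict part.
\end{itemize}
For the strict part, your phrase ``front-loads mass'' is not needed. At the first index $t$ where the vectors differ you have $D_{t-1}=0$, so $D_t=y^{*i}_t-y^i_t$ is nonzero, hence positive, and $t<k$ because $D_k=0$.

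The paper's version is shorter and more intuitive. Yours is more self-contained and fully rigorous as written.
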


\begin{proof}[Proof of Claim]
  \renewcommand{\qedsymbol}{$\textcolor{teal}{\blacksquare}$}
  Take any feasible solution $(x,y)$ and assume $y \neq y^*$. Then, for each voter $i$, we can independently do the following shifting operation without changing the sum of the $y$-variables for this voter, i.e., $\sum_{\ell=1}^k y_{\ell}^i$. As $y\neq y^*$, there exists $\ell, \ell' \in [k]$, where $\ell<\ell'$, such that $y_{\ell}^i < 1$ and $y_{\ell'}^i >0$. Shifting weight from the latter variable to the former variable does not decrease the objective value, as the personalized weight function $w^i$ is non-increasing. Moreover, if the weight function is strictly decreasing, this shifting operation strictly increases the objective value. The claim follows. 
\end{proof}

We also refer to the objective value of $x$, which is formally the objective value of $(x,y^*)$ in \lpa, where $y^*$ is defined as in the claim above. Note that the objective value of $x$ only depends on the values $r_i(x) = \sum_{j \in C_i} x_j$ for all $i$, which we refer to as the \emph{representation value} of $i \in [n]$. 

\begin{claim}\label{claim:shiftingX}
    The shift operation in Line \ref{line:shift} of \Cref{alg:main} does not decrease the objective value of $x$. Moreover, when all weight functions $(w^i)_{i \in [n]}$ are strictly positive, no pair of candidates $j,j'$ can satisfy the condition in the while loop. 
\end{claim}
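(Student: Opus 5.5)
The plan is to track how the shift changes each voter's representation value $r_i(x)=\sum_{j\in C_i}x_j$. The objective value of $x$ depends only on these values, so it suffices to show that no voter's representation value decreases.

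Consider one execution of Line~\ref{line:shift}. We move an amount $\delta>0$ from $x_{j'}$ to $x_j$, where $j$ dominates $j'$, that is, $N_{j'}\subset N_j$. The total $\sum_j x_j$ is unchanged, so Constraint~\ref{const2} still holds, and all entries stay in $[0,1]$ by the stopping rule. For each voter $i$ we split into three cases.
\begin{itemize}
\item If $i\in N_{j'}$, then also $i\in N_j$, so $r_i$ loses $\delta$ and gains $\delta$ and is unchanged.
\item If $i\in N_j\setminus N_{j'}$, then $r_i$ increases by $\delta$.
\item If $i\notin N_j$, then $r_i$ is unchanged.
\end{itemize}
Hence every $r_i$ weakly increases, and after re-choosing $y$ as $y^*$ from Claim~\ref{claim-y} the solution remains feasible for \lpa.

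It remains to check that the objective value of $x$ is monotone in each $r_i$. The contribution of voter $i$ under $y^*$ is the piecewise-linear function
\[
f_i(r)=\sum_{\ell\le\lfloor r\rfloor}w^i_\ell+(r-\lfloor r\rfloor)\,w^i_{\lfloor r\rfloor+1}.
\]
Its slope on each unit interval is some $w^i_\ell\ge 0$, so $f_i$ is non-decreasing. This proves the first part of the claim.

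For the second part, assume all $w^i_\ell>0$ and suppose a pair $j,j'$ with $x_j<1$, $x_{j'}>0$ and $N_{j'}\subset N_j$ existed in the LP-optimal solution. Since the domination is strict, there is a voter $i\in N_j\setminus N_{j'}$. Shifting a small $\delta>0$ raises $r_i$ by $\delta$ and leaves every other representation value unchanged. Because $f_i$ has positive slope everywhere (including at breakpoints, taking the right-derivative), this strictly increases the objective and contradicts optimality.

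The one subtle point is what happens after shifts have already occurred. Since $x$ stays optimal after each shift by the first part, the same argument applies at every iteration, so the while loop is never entered. One remaining detail is the edge case where $r_i$ would exceed $k$; this cannot happen, because $r_i\le\sum_j x_j=k$. No step here is a serious obstacle.
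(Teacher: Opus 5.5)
Your proof is correct and follows the same route as the paper. Every representation value $r_i$ weakly increases, and strictly so for voters in $N_j\setminus N_{j'}$. The objective is non-decreasing in each $r_i$ because the weights are non-negative, and strictly positive weights would give a strict improvement, contradicting LP-optimality.

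One small slip in the second part: the shift raises $r_{i'}$ by $\delta$ for \emph{every} $i'\in N_j\setminus N_{j'}$, not only for your chosen $i$. So ``leaves every other representation value unchanged'' should read ``does not decrease any other representation value''. Your first part already establishes this, so the conclusion stands.
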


\begin{proof}[Proof of Claim]
    \renewcommand{\qedsymbol}{$\textcolor{teal}{\blacksquare}$}
    Note that the shifting operation in Line \ref{line:shift} strictly increases the representation value~$r_i$ for some $i \in [n]$ while it does not decrease the representation value of any other voter. Since all weight functions are non-negative, this operation does not decrease the objective value of $x$. Moreover, if all $(w^i)_{i \in [n]}$ are strictly positive, the shifting operation would strictly increase the objective value of $x$, a contradiction to $(x,y)$ being an optimal solution to \lpa. Hence, the condition in the while loop cannot be satisfied for any pair $j,j' \in [m]$. 
\end{proof}

Let $x$ be the vector after the shift operation, which by \Cref{claim:shiftingX} still has optimal objective value. Let $y^*$ be the corresponding vector according to \Cref{claim-y}, and let $(A',k',w')$ be the residual instance constructed in \Cref{alg:main}. First, note that the property $A' \in \oned$ is inherited by $A \in \oned$ and by the termination of the shifting operation, we also know that $A'$ is domination-free. Thus, by \Cref{lem:chris}, it holds that $A' \in \ci$. Hence, by \Cref{lem:dominik} there exists $(x',y')$ that is optimal for $\text{LP}_{(A',w',k')}$ and integral. 

Now, let $f$ be the function mapping a feasible solution of \lpa to its objective value and similarly, let $f'$ be the function mapping a feasible solution of \lpap to its objective value. We slightly abuse notation, by also allowing $f$ to take sets of candidates as its input, which corresponds to applying $f$ to the characteristic vector of that set. 

\begin{claim}
It holds that $f(x) = f(W_1) + f'(x') = f(W_1) + f'(W') = f(W_1 \cup W')$. 
\end{claim}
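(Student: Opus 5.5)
The plan is to prove one decomposition identity and then sandwich $f(x)$ between two quantities using optimality on both sides. Here $x$ is the vector after the shifting loop, which is still optimal for \lpa by \Cref{claim:shiftingX}. Throughout, the objective value of a vector is the one obtained with the canonical $y^*$ of \Cref{claim-y}, for \lpa and \lpap alike. By \Cref{claim-y} this is the maximum over all $y$ compatible with that vector, so it depends only on the representation values.

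\textbf{Step 1 (decomposition identity).} Let $z'$ be any feasible vector for \lpap, i.e.\ $z' \in [0,1]^{C'}$ with $\sum_{j\in C'} z'_j = k'$. Define $z$ on $[m]$ by $z_j = 1$ for $j \in W_1$, $z_j = 0$ for $j\in W_0$, and $z_j = z'_j$ for $j \in C'$. Then $\sum_j z_j = |W_1| + k' = k$, so $z$ is feasible for \lpa. Moreover $r_i(z) = \ell_i + r'_i(z')$, where $r'_i$ is the representation value in the residual instance.

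Write $r'_i(z') = q + \theta$ with $q$ an integer and $\theta\in[0,1)$. The canonical $y^*$ for $z$ then has ones in positions $1,\dots,\ell_i+q$ and the value $\theta$ in position $\ell_i+q+1$. The contribution of voter $i$ is therefore
\[
\sum_{\ell=1}^{\ell_i} w^i_\ell + \sum_{\ell=1}^{q} w^i_{\ell+\ell_i} + \theta\, w^i_{\ell_i+q+1}
= \sum_{\ell=1}^{\ell_i} w^i_\ell + \Bigl(\sum_{\ell=1}^{q} w'^i_\ell + \theta\, w'^i_{q+1}\Bigr).
\]
The bracketed term is exactly voter $i$'s contribution to $f'(z')$. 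The indices stay in range because $\ell_i + r'_i(z') \le |W_1| + k' = k$. Summing over $i$, and noting that $\sum_i\sum_{\ell\le \ell_i} w^i_\ell = f(W_1)$, gives $f(z) = f(W_1) + f'(z')$.

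\textbf{Step 2 (sandwich).} First, apply Step 1 to $z' = x|_{C'}$. This vector is feasible for \lpap because $\sum_{j\in C'} x_j = k - |W_1| = k'$, and the $z$ it produces is $x$ itself. Since $x'$ is optimal for \lpap, this gives
\[
f(x) = f(W_1) + f'(x|_{C'}) \le f(W_1) + f'(x').
\]
Second, apply Step 1 to $z' = x'$. The resulting $z$ is feasible for \lpa, so optimality of $x$ gives $f(W_1) + f'(x') = f(z) \le f(x)$. Together these yield $f(x) = f(W_1)+f'(x')$.

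\textbf{Step 3 (remaining equalities).} Since $x'$ is integral and $W'$ is its support, $f'(x') = f'(W')$. The vector $z$ built from $x'$ in Step 1 is the characteristic vector of $W_1 \cup W'$, so Step 1 also gives $f(W_1)+f'(W') = f(W_1\cup W')$.

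I expect the main obstacle to be Step 1: checking carefully that the index shift $w'^i_\ell = w^i_{\ell+\ell_i}$ matches the canonical fractional $y^*$ of \Cref{claim-y}, including the range of indices. Once that identity holds, the rest is two applications of LP optimality.
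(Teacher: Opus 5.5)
Your proposal is correct and follows essentially the paper's proof. The paper likewise obtains $f(x)=f(W_1)+f'(\hat x)\le f(W_1)+f'(x')$ from the fractional part of $x$ on $C'$, then gets $f(W_1)+f'(W')=f(W_1\cup W')$ from integrality of $x'$ and the shifted weights ${w'}^i_{\ell}=w^i_{\ell+\ell_i}$, and closes by optimality of $x$ against the feasible committee $W_1\cup W'$. Your single identity $f(z)=f(W_1)+f'(z')$, used twice, is only a tidier packaging of those two weight-shift computations.
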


\begin{proof}[Proof of Claim]
\renewcommand{\qedsymbol}{$\textcolor{teal}{\blacksquare}$}
    We start by showing $f(x) \leq f(W_1) + f'(x')$. 
    We first define $(\hat{x},\hat{y})$ as the fractional part of the solution $(x,y^*)$. 
    Formally, this means $\hat{x}_j = x_j$ if $j \in C'$ and $x_j = 0$ else. It also means $\hat{y}_{\ell}^i = y_{\ell_i + \ell}^i$ for all $\ell \in [k']$ and $i \in [n]$, where $\ell_i=|C_i\cap W_1|$. Note that $(\hat{x},\hat{y})$ is a feasible solution for \lpap. Moreover, the objective value of $(x,y^*)$ is 
    $$f(x)  \stackrel{(1)}{=} \sum_{i \in [n]} \sum_{\ell =1 }^{\ell_i} w_{\ell}^i + \sum_{i \in [n]} \sum_{\ell 
    = \ell_i + 1}^k w_{\ell}^i \cdot y_{\ell}^{*i} \stackrel{(2)}{=} f(W_1) + \sum_{i \in [n]} \sum_{\ell 
    = 1}^{k'} {w'}_{\ell}^{i}\cdot  \hat{y}_{\ell}^i
    \stackrel{(3)}{=} f(W_1) + f'(\hat{x}) \stackrel{(4)}{\leq} f(W_1) + f'(x'),$$
    where (1) follows from the definition of $\ell_i$; (2) follows from the definitions of $w'$ and $\hat{y}$, and the fact that $y_{\ell}^{*i} = 0$ for all $\ell > \ell_i + k'$, since every voter receives exactly representation $\ell_i$ from $W_1$ and at most representation $k'$ from candidates outside of $W_1$; (3) follows since $(\hat{x},\hat{y})$ is a feasible solution of \lpap; and (4) follows since $(x',y')$ is an optimal solution to \lpap. 

\smallskip 

We continue by arguing that $f(W_1) + f'(x') = f(W_1) + f'(W')$ since $x'$ is the characteristic vector of $W'$ as $x'$ is integral. 

\smallskip

Lastly, we show that $f(W_1) + f'(W') = f(W_1 \cup W')$. To this end, define $\ell'_i = \sum_{j \in C_i} x'_j$, which is the representation that $i$ receives from $W'$. Then, $$f(W_1) + f'(W') = \sum_{i \in [n]}\sum_{\ell=1}^{\ell_i}w_{\ell}^i + \sum_{i \in [n]}\sum_{\ell=1}^{\ell'_i}{w'}_{\ell}^i = \sum_{i \in [n]}\sum_{\ell=1}^{\ell_i}w_{\ell}^i + \sum_{i \in [n]}\sum_{\ell=\ell_i + 1}^{\ell_i + \ell'_i}{w}_{\ell}^i = f(W_1 \cup W'),$$ which holds directly by the definition of the weight functions $({w'}^i)_{i \in [n]}$. 

\smallskip

Summarizing, we get that $f(x) \leq f(W_1 \cup W')$. However, since $W_1 \cup W'$ corresponds to a feasible solution to \lpa and $(x,y^*)$ is an optimal solution to \lpa, this implies $f(x) = f(W_1 \cup W')$. 
\end{proof}

Hence, $W_1 \cup W'$ corresponds to an optimal and integral solution to \lpa, which proves the first part of \Cref{thm:main}. It remains to argue about the running time of \Cref{alg:main}.

\smallskip 

To prove the running time is polynomial there are three critical points to analyze. First, in Line~\ref{line:LP}, the linear program \lpa has $O(nk + m)$ variables and constraints, and all coefficients in the objective function and constraints are polynomially bounded. 
It is well known that the corresponding linear relaxation can be solved in polynomial time with respect to their encoding length, for instance via the ellipsoid method of~\cite{Kha79} (see also~\cite{DBLP:books/daglib/0090562},~\cite{GLS93}). 
Similarly, after fixing the integral variables, the residual instance in Line~\ref{line:residual} is domination-free, as argued above. By \Cref{lem:dominik}, such instances can be solved in polynomial time.
Third, we can polynomially bound the number of implementations of the while loop if we carefully select how the weight is shifted. We say $\{j, j'\}$ is a feasible {\em recipient-donor pair} if
$x_j<1$, $x_{j'}>0$ and $j$ (recipient) dominates $j'$ (donor).
Among all feasible pairs, we switch weight from a donor $j'$ to a recipient $j$ that maximizes $|N_j\setminus N_{j'}|$. By doing so, we claim the total number of feasible pairs strictly decreases. To see this, observe that after the weight switch either $x_j=1$ or $x_j=0$.
In the former case, $j$ can no longer be a recipient in any feasible recipient-donor pair as now $x_j=1$. Moreover,
we cannot create a new recipient-donor pair $\{j'',j\}$ in which $j$ is the donor because then
$\{j'',j'\}$ was a feasible recipient-donor pair with $|N_{j''}\setminus N_{j'}|> |N_j\setminus N_{j'}|$, a contradiction. In the latter case, 
$j'$ can no longer be a donor
in any feasible recipient-donor pair as now $x_{j'}=0$. Moreover,
we cannot create a new recipient-donor pair $\{j',j''\}$ in which $j'$ is the recipient because then
$\{j,j''\}$ was a feasible recipient-donor pair with $|N_{j}\setminus N_{j''}|> |N_j\setminus N_{j'}|$, a contradiction. Thus the while loop is applied at most a polynomial number of times. 
The remaining steps of the algorithm trivially run in polynomial time, completing the proof.
\end{proof}

Having established that \lpa always admits an optimal solution that is also integral, it is natural to ask whether all optimal extreme points of \lpa share this property. If so, one could replace \Cref{alg:main} with the simpler procedure of finding any optimal extreme point of \lpa, which would automatically yield an integral solution. Unfortunately, the following example shows that this stronger statement does not hold in general.

\begin{example}
    Consider the same approval matrix as in \Cref{ex1}, i.e.,\[A = \begin{pmatrix}
    1 & 1 & 0 & 0\\
    1 & 0 & 1 & 0\\
    1 & 0 & 0 & 1
    \end{pmatrix}, 
    \]
    let $k=2$, and let $(w^i)$ be defined as $w^i_{\ell} = 0$ for all $i \in [n], \ell \in [k]$. Then, consider the point $x_j=1/2$ for all $j \in [4]$, $y_{1}^i = 1, y_2^i = 0$ for all $i \in [3]$. Clearly, this point is optimal and we argue in the following that it is also an extreme point. To this end, note that there are ten active constraints that uniquely determine these ten variables in the solution: The six active $y$-constraints and the following four constraints: \begin{align*}
        x_1 + x_2 &= 1 \\ 
        x_1 + x_3 &= 1 \\ 
        x_1 + x_4 &= 1 \\ 
        x_1 + x_2 + x_3 + x_4 &= 2. 
    \end{align*}
    Note that subtracting the former three equalities from the last equality gives $-2x_1 = -1$. Thus $x_1 = 1/2$ inducing $x_j = 1/2$, for all $j \in [4]$. It follows that this is indeed an extreme point.
\end{example}

Having a weight function that is all-zero might be unsatisfactory.
Note, however, that the example extends easily to the Chamberlin--Courant weight functions
(i.e., $w_1 = 1$ and $w_\ell = 0$ for all $\ell > 1$) by adding a single candidate approved by all three voters and giving it a value of $1$ in its $x$-variable, and setting $k=3$.
In contrast, we show below that for strictly positive and decreasing weight functions, every optimal extreme point of \lpa must be integral in the $x$-variables.

\begin{theorem}
    For any instance $(A,k,w)$ with $A \in \oned$ and strictly decreasing and positive weight functions $(w^i)_{i \in [n]}$, every optimal extreme point of \lpa is integral. \label{thm:extreme}
\end{theorem}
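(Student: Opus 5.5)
The plan is to run the argument of the proof of \Cref{thm:main} on an arbitrary optimal extreme point, and to show that the fractional candidates span an empty residual instance. Let $(x,y)$ be an optimal extreme point of \lpa. Since all $w^i$ are strictly decreasing, \Cref{claim-y} gives $y=y^*$. Otherwise $(x,y^*)$ would have strictly larger objective value, contradicting optimality. Since all $w^i$ are strictly positive, \Cref{claim:shiftingX} shows that no feasible recipient-donor pair exists. So the while loop of \Cref{alg:main} performs no shift and $x$ is unchanged.

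Define $W_0,W_1,C',k',\ell_i,w'$ and $A'$ as in \Cref{alg:main}, so that $C'$ is exactly the set of candidates with $0<x_j<1$. Suppose for contradiction that $C'\neq\emptyset$. I first check that $A'$ is domination-free. If $j,j'\in C'$ with $N_{j'}\subset N_j$, then $x_j<1$ and $x_{j'}>0$, so $\{j,j'\}$ is a feasible recipient-donor pair, which is excluded. Domination depends only on the sets $N_j$, and restricting to columns leaves all voters in place, so $A'$ is domination-free. Moreover $A'\in\oned$, being a column-submatrix of $A$. Hence \Cref{lem:chris} gives $A'\in\ci$, and by \Cref{lem:dominik} the constraint matrix of \lpap is totally unimodular. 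Its right-hand sides $k'$, $0$, and the bounds $0,1$ are all integral, so every extreme point of \lpap is integral.

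Next let $(\hat x,\hat y)$ be the fractional part of $(x,y)$, exactly as in the proof of \Cref{thm:main}. Set $\hat x_j=x_j$ for $j\in C'$ and $\hat y^i_\ell=y^i_{\ell_i+\ell}$ for $\ell\in[k']$. This point is feasible for \lpap. The key step, which I expect to be the main obstacle, is to show that it is an \emph{extreme point} of \lpap. For this I define a lifting map $L$ from feasible points $(\tilde x,\tilde y)$ of \lpap to points of \lpa. The lift sets $x_j=1$ on $W_1$, $x_j=0$ on $W_0$, and $x_j=\tilde x_j$ on $C'$. It sets $y^i_\ell=1$ for $\ell\le\ell_i$, $y^i_{\ell_i+\ell}=\tilde y^i_\ell$ for $\ell\in[k']$, and $y^i_\ell=0$ for $\ell>\ell_i+k'$. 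One checks directly that $L(\tilde x,\tilde y)$ satisfies Constraints~\ref{const2} and \ref{const3} and the bounds, and that $L$ is affine and injective. The original point satisfies $L(\hat x,\hat y)=(x,y)$. This uses $y=y^*$: it gives $y^i_\ell=1$ for $\ell\le\ell_i$, and $y^i_\ell=0$ for $\ell>\ell_i+k'$ by the representation bound noted in step (2) of the proof of \Cref{thm:main}.

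Now suppose $(\hat x,\hat y)=\tfrac12(p+q)$ with $p,q$ feasible for \lpap. Then $(x,y)=\tfrac12(L(p)+L(q))$, and extremality of $(x,y)$ forces $L(p)=L(q)$. Injectivity of $L$ then gives $p=q$. Hence $(\hat x,\hat y)$ is an extreme point of \lpap and is therefore integral. But $\hat x_j=x_j\in(0,1)$ for every $j\in C'\neq\emptyset$, a contradiction. So $C'=\emptyset$ and $x$ is integral. Then $y=y^*$ is integral as well, because every $r_i(x)$ is an integer.
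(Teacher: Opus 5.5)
Your proof is correct and follows the same route as the paper. Positivity rules out recipient--donor pairs, so the residual instance $(A',k',w')$ is domination-free, hence in $\ci$ by \Cref{lem:chris} and with an integral polytope by \Cref{lem:dominik}; extremality is then transferred between the fractional part $(\hat x,\hat y)$ and $(x,y)$ via the shift-and-embed correspondence. The only difference is cosmetic: the paper lifts a perturbation direction $\pm(\Delta\hat x,\Delta\hat y)$ to contradict extremality of $(x,y)$, whereas you lift the endpoints through the injective affine map $L$. You are also more explicit than the paper that strict decrease plus optimality forces $y=y^*$, which is exactly what gives $L(\hat x,\hat y)=(x,y)$.
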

\begin{proof}
    Let $(x,y)$ be an optimal extreme point of \lpa that is not integral.
By \Cref{claim-y}, $y$ can be transformed into $y^*$, where, for each voter $i \in [n]$, at most one $y^*$-variable is non-integral.
This immediately implies that if $x$ is integral, then $y^*$ must be integral as well,
since the $y^*$-variables of each voter must sum to an integer which is not possible if exactly one of them is non-integral.

Hence, in the remainder of the proof we assume that $x$ is non-integral. Since the weight functions are positive, we know that there does not exist a pair of candidates $j,j' \in [m]$ such that $x_j < 1$ and $x_{j'}>0$ and $j$ dominates $j'$. Now, consider the residual instance $(A',k',w')$ exactly as defined in~\Cref{alg:main} and define $(\hat{x},\hat{y})$ as the fractional part of $(x,y)$, where $y$ satisfies the properties of $y^*$. Formally, this means $\hat{x}_j = x_j$ if $j \in C'$, where $C'$ is defined as in \Cref{alg:main} and $\hat{y}$ as $\hat{y}_{\ell}^i = y_{\ell_i + \ell}^i$ for all $\ell \in [k']$ and $i \in [n]$. Now, note that $(\hat{x},\hat{y})$ is a feasible solution for \lpap and note that $\hat{x}$ is also non-integral. Since $A' \in \ci$, we know by \Cref{lem:dominik} that the constraint matrix of \lpap is totally unimodular and therefore $(\hat{x},\hat{y})$ cannot be an extreme point because every extreme point of \lpap is integral. Thus, there exist two vectors $\Delta \hat{x}$ and $\Delta \hat{y}$, such that $(\hat{x} + \Delta\hat{x}, \hat{y} + \Delta \hat{y})$ and $(\hat{x} - \Delta \hat{x}, \hat{y} - \Delta \hat{y})$ are both feasible solutions for \lpap. 

We are going to use $(\Delta\hat{x},\Delta\hat{y})$ to show that $(x,y)$ is not an extreme point of \lpa. To this end, we first observe that the feasibility of $(\hat{x} + \Delta\hat{x}, \hat{y} + \Delta \hat{y})$ and $(\hat{x}, \hat{y})$ implies that 
\begin{align}
\sum_{j \in C'} \Delta \hat{x}_j \label{extreme1} &= 0 \\
\text{and}\qquad\qquad\nonumber \\ 
    \sum_{\ell =1}^{k'} \Delta \hat{y}_{\ell}^i \label{extreme2} &= \sum_{j \in C_i \cap C'} \Delta \hat{x}_j \quad && \text{for all } i \in [n]. 
\end{align}
Next we define the $m$-dimensional vector $\Delta x$ as $\Delta x_j = \Delta \hat{x}_j$ for $j \in C'$ and $\Delta x_j = 0$ for $j \in [m] \setminus C'$. We also define $\Delta y_{\ell}^i = 0$ for all $i \in [n]$ and $\ell \in [\ell_i]$ and $\Delta y_{\ell}^i = \Delta \hat{y}_{\ell - \ell_i}^i$ for all $i \in [n]$ and $\ell \in \{\ell_i+1, \dots, k\}$. Then we can show that both $(x + \Delta x, y + \Delta y)$ as well as $(x - \Delta x, y - \Delta y)$ are feasible for \lpa. First, Constraint \ref{const2} follows directly from \Cref{extreme1} since $$\sum_{j \in [m]} x_j = \sum_{j \in [m]} (x_j + \Delta x_j) = \sum_{j \in [m]} (x_j - \Delta x_j) = k.$$ Second, Constraint \ref{const3} follows from \Cref{extreme2} since $$\sum_{\ell = 1}^k (y_{\ell}^i + \Delta y_{\ell}^i) = \sum_{\ell = 1}^k y_{\ell}^i + \sum_{\ell = 1}^{k'} \Delta \hat{y}_{\ell}^i \stackrel{(\ref{extreme2})}{=} \sum_{j \in C_i} x_j  + \sum_{j \in C_i \cap C'} \Delta \hat{x}_j = \sum_{j \in C_i} (x_j + \Delta \hat{x}_j).$$ 

To see why all variables are within the interval $[0,1]$, note that $x_j + \Delta x_j = \hat{x}_j + \Delta\hat{x}_j$ for all $j \in C'$ and $x_j + \Delta x_j = x_j$ for all $j \in [m] \setminus C'$. Similarly, for all $i \in [n]$ it holds that $y_{\ell}^i + \Delta y_{\ell}^i = y_{\ell}^i$ for $\ell \leq \ell_i$ and $y_{\ell}^i + \Delta y_{\ell}^i = \hat{y}_{\ell - \ell_i}^i + \Delta \hat{y}_{\ell - \ell_i}^i$ for $\ell > \ell_i$. Thus, the feasibility of the variables is inherited by the feasibility of $(x,y)$ and $(\hat{x} + \Delta \hat{x}, \hat{y} + \Delta \hat{y})$. The feasibility of $(x - \Delta x, y - \Delta y)$ follows analogously. 

Hence, we have shown that $(x,y)$ is not an extreme point of \lpa, yielding the desired contradiction. 
\end{proof}

\Cref{thm:extreme} applies to the important special case of the PAV voting rule, which uses weight functions $w_\ell = \frac{1}{\ell}$ for all $\ell \in [k]$. Consequently, we have shown that on the voter-candidate interval domain, PAV can be solved by finding any optimal extreme point of \lpa.

\section{Linearly Consistent Profiles}\label{sec:LC}

In this section, we show that \Cref{alg:main} extends to the class of linearly consistent (LC) profiles, which were introduced by \citep{10.1007/978-3-031-22832-2_18}. While the relationship between the VCI domain and the LC domain was previously unknown, we show that VCI is a subdomain of LC and highlight an alternative definition of the LC domain.

\cite{10.1007/978-3-031-22832-2_18} introduced linear consistency for the more general class of weak orders. In our setting, they are defined as follows: We say that an approval matrix (or instance) is {\em linearly consistent} if there exists a linear order $\succ$ on the set of voters and candidates such that for any voters $i \succ j$ and candidates $a \succ b$, if $A[i][b]=1$ and $A[j][a]=1$ then also $A[i][a]=1$. We then denote the class of \emph{linearly consistent} matrices by $\mathcal{A}^{\text{LC}}$. We remark that an equivalent definition for bipartite graphs was introduced by \cite{SPINRAD1987279} under the term \emph{bipartite permutation graphs}.

When proving \Cref{thm:main,thm:extreme}, we observed that the specific property we required for the $\oned$ domain was the fact that \Cref{lem:chris} holds. In the following we show that this lemma extends to the LC domain.

\begin{lemma}
If $A \in\mathcal{A}^{\text{LC}}$ is a domination-free approval matrix then $A \in \ci$. 
\end{lemma}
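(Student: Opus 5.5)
The plan is to take the linear order $\succ$ witnessing $A \in \lc$, restrict it to the candidates, and show that in this column order every voter's approval set $C_i$ is an interval. That is exactly the definition of $A \in \ci$. We argue by contradiction. Suppose some voter $i$ approves candidates $a$ and $c$ with $a \succ b \succ c$ for some candidate $b \notin C_i$. We will show that $b$ is dominated by $a$, i.e.\ $N_b \subset N_a$, which contradicts the assumption that $A$ is domination-free.

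The key step is to locate every supporter $j \in N_b$ relative to $i$ in the order $\succ$. Note $j \neq i$, since $i \notin N_b$.

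\emph{Case $i \succ j$.} Apply the LC condition to voters $i \succ j$ and candidates $b \succ c$. We have $A[i][c]=1$ and $A[j][b]=1$, so $A[i][b]=1$. This contradicts $b \notin C_i$, so no supporter of $b$ lies below $i$.

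\emph{Case $j \succ i$.} Apply the LC condition to voters $j \succ i$ and candidates $a \succ b$. We have $A[j][b]=1$ and $A[i][a]=1$, so $A[j][a]=1$.

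Together the two cases give $N_b \subseteq N_a$. Since $i \in N_a \setminus N_b$, the inclusion is strict, so $a$ dominates $b$, the desired contradiction.

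The main thing to be careful about is applying the LC condition in the right orientation in each case: the higher voter must be the one approving the lower candidate. Checking this for both cases above, the orientation is correct. The degenerate case $N_b = \emptyset$ needs no separate treatment, because the argument still yields $\emptyset = N_b \subset N_a \ni i$. Hence every $C_i$ is contiguous in the candidate order induced by $\succ$, and so $A \in \ci$. This establishes the analogue of \Cref{lem:chris} for $\lc$, so \Cref{thm:main,thm:extreme} carry over to the LC domain verbatim.
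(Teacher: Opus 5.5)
Your proof is correct and follows essentially the same route as the paper: you restrict the LC order to the candidates, assume a gap $a \succ b \succ c$ in some $C_i$, and apply the LC condition twice (once for supporters of $b$ below $i$, once for those above $i$) to conclude that $a$ dominates $b$. Your explicit observation that $i \in N_a \setminus N_b$ makes the inclusion $N_b \subset N_a$ strict is a small detail the paper leaves implicit.
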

\begin{proof}
Since \(A \in \mathcal{A}^{\text{LC}}\), there exists a linear order \(\succ\) on the voters and candidates satisfying the stated property. We show that, if \(A\) is \emph{domination-free} then the order \(\succ\) also witnesses the candidate-interval property.
Order the voters and candidates according to \(\succ\). We claim that, with respect to this order, the \(1\)-entries in each row form an interval. Suppose not. Then there exists a voter~\(i\) and candidates \(a \succ b \succ c\) such that $A[i][a]=1, A[i][b]=0, A[i][c]=1$.
We claim that candidate \(a\) dominates candidate \(b\). Let $j\neq i$ be any voter with \(A[j][b]=1\).   
    
\smallskip
    \noindent\textbf{Case 1.} \(i \succ j\). Then since \(b \succ c\), \(A[i][c]=1\), and \(A[j][b]=1\), the linearly-consistent property implies that $A[i][b]=1$, a contradiction.
    
\smallskip
    \noindent\textbf{Case 2.} \(j \succ i\). Now, since \(j \succ i\), \(a \succ b\), \(A[j][b]=1\), and \(A[i][a]=1\), the linearly-consistent property implies that $A[j][a]=1$. In particular,
    every voter who approves of candidate \(b\) also approves of candidate \(a\). Therefore candidate \(a\) dominates candidate \(b\), contradicting the assumption that \(A\) is \emph{domination-free}. 
    
Thus, the \(1\)-entries in every row form an interval. Thus, \(A \in \ci\).
\end{proof}

Therefore our proofs of \Cref{thm:main,thm:extreme} immediately apply to the LC domain. In particular, we obtain the following analogous statements. 

\begin{theorem}\label{thm:main2}
    For any instance $(A,k)$ with $A \in \lc$ and any non-increasing weight functions $w$, \lpa has an optimal solution that is integral, which Algorithm \ref{alg:main} finds in polynomial time. Moreover, for any instance $(A,k)$ with $A \in \lc$ and strictly decreasing and positive weight functions $(w^i)_{i \in [n]}$, every optimal extreme point of \lpa is integral.
\end{theorem}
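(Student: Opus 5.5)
The plan is to rerun the proofs of \Cref{thm:main} and \Cref{thm:extreme} verbatim, with $\oned$ replaced by $\lc$, and to check that the membership $A \in \oned$ enters those proofs in exactly one way. It is used only to conclude that the residual matrix $A'$ lies in $\ci$. The preceding lemma establishes this for $\lc$: every domination-free matrix in $\lc$ lies in $\ci$. So the main work is a careful audit of the earlier argument, plus one heredity fact.

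\textbf{Step 1: heredity.} First I will verify that $\lc$ is closed under deleting columns. Suppose a linear order $\succ$ on voters and candidates witnesses linear consistency of $A$. Restrict $\succ$ to the voters and the surviving candidates. The defining implication (for $i \succ j$, $a \succ b$, $A[i][b]=A[j][a]=1$ implies $A[i][a]=1$) only mentions entries of the submatrix, so it still holds. Hence $A'$, the submatrix of $A$ on the columns $C'$, lies in $\lc$.

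\textbf{Step 2: domination-freeness of $A'$.} After the while loop of \Cref{alg:main} terminates, no pair $j,j'$ satisfies $x_j<1$, $x_{j'}>0$ with $j$ dominating $j'$. All candidates in $C'$ have $x$-values strictly between $0$ and $1$. Moreover, domination in $A'$ is the same as domination in $A$, since the supporter sets $N_j$ do not change when columns are deleted. Therefore $A'$ is domination-free, and the preceding lemma gives $A' \in \ci$. \Cref{lem:dominik} then applies to $(A',k',w')$.

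\textbf{Step 3: the rest carries over unchanged.} The remaining ingredients of the earlier proof never use interval structure:
\begin{itemize}
\item \Cref{claim-y} and \Cref{claim:shiftingX};
\item the objective decomposition claim $f(x)=f(W_1\cup W')$;
\item the potential argument bounding the number of while-loop iterations via recipient-donor pairs;
\item the running-time analysis of the LP solve.
\end{itemize}
This yields the first part. For the extreme-point statement, I will follow the proof of \Cref{thm:extreme}. With strictly positive weights, \Cref{claim:shiftingX} shows that no shift is possible, so $A'$ is directly domination-free and lies in $\ci$ by Steps 1–2. Total unimodularity of \lpap then lets me lift a perturbation direction $(\Delta\hat x,\Delta\hat y)$ to one for \lpa, as before, which contradicts extremality.

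I expect essentially no obstacle. The only point needing care is Step 1 together with the observation that domination is preserved under column deletion: I will double-check that deleting columns does not create new domination relations among the remaining candidates, which is clear because each $N_j$ is unchanged.
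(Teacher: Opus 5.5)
Your proposal is correct and follows the paper's own argument. The paper likewise notes that the only use of $A \in \oned$ in the proofs of \Cref{thm:main} and \Cref{thm:extreme} is to place the domination-free residual matrix $A'$ in $\ci$ via \Cref{lem:chris}. It then proves the analogous lemma for $\lc$ and says both proofs carry over verbatim. Your explicit check that $\lc$ is closed under deleting columns, with supporter sets and hence domination relations unchanged, makes precise the heredity step the paper uses only implicitly.
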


\paragraph{The Relationship between the LC and VCI Domains.}

Both the LC and VCI domains have previously been studied in computational social choice. However, to the best of our knowledge, their relationship has perhaps surprisingly remained open. Interestingly though, closely related concepts have been studied in graph theory. Specifically, there are bipartite graph classes whose biadjacency matrices correspond to matrices in $\oned$ and $\lc$, respectively. For $\oned$, this graph class is that of interval bigraphs, and the equivalence is immediate. For $\lc$, the corresponding graph classes are known under several names and definitions; see the discussion in the next paragraph. In particular, Theorems 1.4 and 3.1 of \cite{saha_permutation_2014}, which relate interval bigraphs to intersections of chain graphs, implicitly imply the following theorem. We provide a direct proof in our terminology.

\begin{restatable}{theorem}{thmOneDSubsetLC}
    It holds that $\oned \subseteq \lc$. 
\end{restatable}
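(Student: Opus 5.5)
The plan is to build the required linear order directly from a 1D-voter-candidate range explanation of $A$, sorting everything by left endpoints. Fix such an explanation: voter $i$ gets an interval $[s_i,e_i]$ and candidate $a$ gets an interval $[s_a,e_a]$, with $A[i][a]=1$ if and only if the two intervals intersect. For closed intervals this means
\[
A[i][a]=1 \iff s_i \le e_a \text{ and } s_a \le e_i .
\]
I will define $\succ$ on the union of voters and candidates by declaring $x \succ y$ whenever $s_x < s_y$, breaking ties between equal left endpoints arbitrarily but consistently, for instance by index. This gives a linear order.

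To verify linear consistency, take voters $i \succ j$ and candidates $a \succ b$ with $A[i][b]=1$ and $A[j][a]=1$. We must show $A[i][a]=1$, that is, $s_i \le e_a$ and $s_a \le e_i$. Since $\succ$ refines the weak order by left endpoint, we have $s_i \le s_j$ and $s_a \le s_b$, and these inequalities survive arbitrary tie-breaking. The two required inequalities then follow by chaining:
\begin{align*}
s_i &\le s_j \le e_a && \text{(using $A[j][a]=1$)},\\
s_a &\le s_b \le e_i && \text{(using $A[i][b]=1$)}.
\end{align*}
Hence the intervals of $i$ and $a$ intersect, so $A[i][a]=1$. Therefore $A \in \lc$.

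I do not expect any step to be a real obstacle. The points to state carefully are the following.
\begin{itemize}
\item The LC condition only ever compares voters with voters and candidates with candidates. So the relative placement of voters versus candidates in $\succ$ is irrelevant, and merging both groups into one order by left endpoint is harmless.
\item Ties in left endpoints cause no trouble, because the argument uses only the weak inequalities $s_i \le s_j$ and $s_a \le s_b$.
\item If the explanation uses open or half-open intervals, one should first perturb it into an equivalent closed-interval explanation, or simply redo the same chaining with the appropriate strict inequalities.
\end{itemize}
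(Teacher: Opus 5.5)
Your proof is correct and follows the same route as the paper: order all voters and candidates by left endpoint, then check the LC condition. The only difference is cosmetic. You chain $s_i \le s_j \le e_a$ and $s_a \le s_b \le e_i$ directly, whereas the paper argues by contradiction over the two ways the intervals of $i$ and $a$ could be disjoint.
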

\begin{proof}
  Let $A \in \oned$. Then, $A$ admits a \emph{1D-voter-candidate range} explanation. Therefore, there exists a collection of closed intervals on the real line \(I_a=[l_a,r_a]\) for all voters and candidates such that $A[i][c]=1$ if and only if $I_c \cap I_i \neq \emptyset$. Order the voters and candidates such that $x \succ y$ if  and only if $l_x \leq l_y$, for all voters and candidates, subject to breaking ties arbitrarily. We claim that \( \succ\) witnesses that \(A\) admits a \emph{linearly-consistent} explanation. Consider any voters \(i, j\) and any candidates \(a,b\) such that \(i \succ j\), \(a \succ b\), \(A[i][b]=1\) and \(A[j][a]=1\). We need to prove that \(A[i][a]=1\). Assume for the sake of contradiction that \(A[i][a]=0\). Then, there are two possible cases: 

 \smallskip
\noindent\textbf{Case 1.} \(I_a\) lies entirely to the left of \(I_i\), i.e., \(r_a < l_i\). Combining with \(l_i \le l_j\) gives \(r_a < l_j\), so \(I_a \cap I_j = \emptyset\), contradicting the fact that \(A[j][a]=1\).
 
\smallskip
\noindent\textbf{Case 2.} \(I_i\) lies entirely to the left of \(I_a\), i.e., \(r_i < l_a\). Combining with \(l_a \le l_b\) gives \(r_i < l_b\), so \(I_i \cap I_b = \emptyset\), contradicting the fact that \(A[i][b]=1\).

 In both cases we reached a contradiction, therefore $A[i][a]=1$ and thus $\succ$ witnesses that $A$ admits a \emph{linearly-consistent} explanation.
\end{proof}

For the next statement, implying that $\oned \subsetneq \lc$, the connection is slightly less immediate. \cite{das_interval_1989} show that the class of \emph{interval digraphs} is contained in the class of \emph{digraphs with Ferrers dimension at most 2}. Although these are classes of directed graphs, analogous definitions exist for bipartite graphs, where they correspond to the classes of VCI and LC approval matrices, respectively. However, we could not find the corresponding statements for bipartite graph classes in the literature and therefore translate the proof of \cite{das_interval_1989} to the bipartite case.

\begin{restatable}{theorem}{thmProperSubclass} \label{LC not oned} 
    There exists $A \in\mathcal{A}^{\text{LC}}$ such that $A \notin \oned$.
\end{restatable}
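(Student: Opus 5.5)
The plan is to exhibit one explicit small approval matrix $A$ and verify two things separately: that $A \in \lc$, by writing down a witnessing linear order $\succ$, and that $A \notin \oned$, by showing that no family of intervals can realize it. Following the remark before the statement, I would translate the digraph construction of \cite{das_interval_1989} into the bipartite setting. That construction separates interval digraphs from digraphs of Ferrers dimension at most $2$, and its bipartite analogue should separate VCI matrices from LC matrices.

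\textbf{Reformulating LC.} First I would restate the LC condition in Ferrers language. Fix a linear order $\succ$ on voters and candidates. Say a voter $i$ and a candidate $a$ are an \emph{inversion pair} if $A[i][a]=0$ while some $j \prec i$ approves $a$ and $i$ approves some $b \prec a$. The definition forbids exactly these pairs. Equivalently, $A$ is the entrywise AND of two $0$-$1$ matrices that are each a staircase (Ferrers) matrix with respect to the order induced by $\succ$. This reformulation makes it easy to certify that a candidate matrix is LC: I only need to exhibit the order and check that no inversion pair occurs.

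\textbf{Candidate example and the non-VCI argument.} For the negative side I would use the standard obstruction for interval bigraphs: three edges forming an \emph{edge-asteroid}. Each pair of these edges is joined by a path that avoids the closed neighbourhood of the third edge. On the real line, the three corresponding interval unions $I_{i}\cup I_{a}$ are connected, and one of them must lie between the other two. The path connecting the outer two then consists of pairwise intersecting intervals, so it sweeps continuously across the middle region and must intersect an interval adjacent to the middle edge. That contradicts the avoidance property.

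My first concrete try is the spider with three legs of length three: a centre $c$ and paths $c-v_t-u_t-w_t$ for $t=1,2,3$, with the bipartition into voters and candidates. Here the three outer edges $u_tw_t$ form an edge-asteroid. If this spider turns out not to be LC, I will fall back to the bipartite translation of the Das–Sen–West example. That example is designed to have Ferrers dimension $2$, so the LC certificate will be available by construction, and I would transfer their asteroid-type argument through the same case analysis on interval endpoints as above.

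\textbf{Main obstacle.} The hard step is choosing an example that sits on both sides simultaneously. Proper interval bigraphs, which are the bipartite permutation graphs, are already interval bigraphs. So the LC witness order must genuinely exploit the one-sided nature of the condition as stated here, where voters and candidates are interleaved in a single order and only one implication is required. I would first search small matrices (about $4\times 4$ to $5\times 5$) for a single order with no inversion pair. In parallel I would check the edge-asteroid condition, which is straightforward. Once a matrix passing both checks is found, the remaining verification is routine.
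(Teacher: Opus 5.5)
Your plan has a genuine gap on the negative side, and it breaks the strategy itself: the edge-asteroid obstruction can never show that an LC matrix lies outside $\oned$. Viewed as bipartite graphs, the matrices in $\lc$ are exactly the bipartite graphs whose complement is a circular-arc graph. By the theorem of Feder, Hell and Huang, these are precisely the chordal bipartite graphs with \emph{no edge-asteroid}. The paper lists this among its equivalent characterizations of LC, citing Shrestha, Tayu and Ueno. So $\oned$ and $\lc$ both exclude edge-asteroids, no matrix can pass both of your checks, and your search over small matrices is guaranteed to fail.

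Your spider shows this concretely. It contains the edge-asteroid $\{u_1w_1,u_2w_2,u_3w_3\}$, and it is indeed not LC. LC is invariant under transposition, so take $c,u_1,u_2,u_3$ as voters and work in the interval-containment model ($A[i][c]=1$ iff $J_c\subseteq I_i$). Then $I_{u_t}\not\subseteq I_c$, because $J_{w_t}\subseteq I_{u_t}$ but $J_{w_t}\not\subseteq I_c$. Also $I_{u_t}\not\supseteq I_c$, because $u_t$ misses $v_s$ for $s\neq t$. Hence $I_{u_t}$ protrudes beyond $I_c$ on exactly one side. This forces $v_t$ to be either the unique $v$ with the smallest right endpoint or the unique $v$ with the largest left endpoint, which cannot hold for all three legs.

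The fallback does not repair this. The Das--Sen--West separation is not an asteroid argument; it rests on the zero-partition characterization of interval digraphs. Also, as written, your proposal never exhibits a concrete matrix, which an existence statement requires.

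What you need is an obstruction specific to interval bigraphs that LC matrices can still contain. The paper uses the easy direction of that zero-partition characterization. If $A\in\oned$, order rows and columns by left endpoints. Label each $0$ as $L$ if the voter's interval lies entirely left of the candidate's, and $R$ otherwise. Then an $L$ has only $L$'s to its right in its row, and an $R$ has only $R$'s below it in its column. Consequently, for rows $i,j$ and columns $a,b$ with $A[i][a]=A[j][b]=1$ and $A[i][b]=A[j][a]=0$, the two zeros must receive different labels.

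The paper takes an explicit $7\times 7$ matrix that is LC under the displayed order. There these $2\times 2$ constraints propagate and force one of two mirror-image labelings. In each labeling one finds four rows and four columns whose labels pin down the last two columns and the last two rows, so that a single entry would have to be labeled both $L$ and $R$. Your LC side (an entrywise AND of two Ferrers matrices, or an explicit order) is fine. The certificate of non-membership in $\oned$ is what must be replaced by a forced-labeling argument of this kind.
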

To prove this, we require the following two lemmas. 
\begin{lemma}\label{L-R colouring}
    If $A \in \oned$, then there exists a permutation of columns and rows of $A$ such that every $0$-entry of $A$ can be labeled either $L$ or $R$ in a way such that:
    \begin{itemize}
        \item every entry labeled $L$ has only $0$'s, all labeled $L$, to its right in the same row, and
        \item every entry labeled $R$ has only $0$'s, all labeled $R$, below it in the same column.
    \end{itemize}
\end{lemma}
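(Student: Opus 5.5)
We work directly from a 1D-voter-candidate range explanation of $A$: closed intervals $I_i=[l_i,r_i]$ for voters and $I_c=[l_c,r_c]$ for candidates with $A[i][c]=1$ iff $I_i\cap I_c\neq\emptyset$. The natural permutation is to order rows by increasing left endpoint of the voter intervals and columns by increasing right endpoint of the candidate intervals, breaking ties arbitrarily but consistently. A $0$-entry $A[i][c]$ means the intervals are disjoint, so exactly one of two cases holds: either $r_i<l_c$ (the candidate lies strictly to the right of the voter), or $r_c<l_i$ (the candidate lies strictly to the left of the voter). We label the entry $L$ in the first case and $R$ in the second; the labelling is well defined since the two cases are mutually exclusive.

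\textbf{$L$-labels.} Suppose $A[i][c]=0$ is labelled $L$, so $r_i<l_c$, and let $c'$ be a column to the right of $c$, i.e.\ $r_{c'}\ge r_c$. We want $r_i<l_{c'}$, which is not automatic when we order by right endpoints. So this ordering looks wrong for the $L$-part. Rather, we should order columns by left endpoints $l_c$: then $l_{c'}\ge l_c>r_i$, giving $A[i][c']=0$ with label $L$, as required.

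\textbf{$R$-labels.} Suppose $A[i][c]=0$ is labelled $R$, so $r_c<l_i$, and let $i'$ be a row below $i$. Ordering rows by increasing left endpoint gives $l_{i'}\ge l_i>r_c$, so $A[i'][c]=0$ with label $R$, as required.

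So the plan is to sort rows by $l_i$ and columns by $l_c$, both increasing, and apply the two one-line inequality chains above. Ties cause no trouble because every step uses only non-strict inequalities of left endpoints, chained with the strict disjointness inequality. The main obstacle is simply choosing the right pair of orderings, so that the $L$-condition propagates rightward along rows and the $R$-condition propagates downward along columns. The key observation is that the $L$-case is controlled by the candidate's left endpoint, and the $R$-case by the voter's left endpoint. Once this is seen, the verification is immediate, and I would present it as the two cases above.
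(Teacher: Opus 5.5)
Your final plan is correct and is essentially the paper's proof: order rows and columns by increasing left endpoints, label a $0$-entry $L$ when $r_i<l_c$ and $R$ when $r_c<l_i$, and propagate with $l_{c'}\ge l_c>r_i$ along rows and $l_{i'}\ge l_i>r_c$ down columns. In the write-up, drop the initial right-endpoint ordering of the columns and state the left-endpoint ordering from the start.
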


\begin{proof}
    Since $A \in \oned$ there exists a collection of closed intervals on the real line \(I_a=[l_a,r_a]\) for all voters and candidates such that $A[i][c]=1$ if and only if $I_c \cap I_i \neq \emptyset$. Order the voters and candidates such that $x \succ y$ if and only if $l_x \leq l_y$ for all voters and candidates, subject to breaking ties arbitrarily. Permute the columns and rows of $A$ with respect to $\succ$. Now, consider any $0$-entry $A[i][c]=0$.  Since \(I_i\cap I_c=\emptyset\), the two intervals are disjoint, so exactly one of the following holds:
\begin{itemize}
    \item \(I_i\) lies entirely to the left of \(I_c\), that is, \(r_i < l_c\);
    \item \(I_i\) lies entirely to the right of \(I_c\), that is, \(r_c < l_i\).
\end{itemize}
In the first case, label the entry \(L\). In the second case, label it \(R\).

We now verify that these labels satisfy the required properties.
First, suppose that \(A[i][c]=0\) is labeled \(L\). Then \(I_i\) lies entirely to the left of \(I_c\), so
\(r_i < l_c.\)
Let \(d\) be any candidate whose column lies to the right of column \(c\). By construction of the column order,
\(l_c \le l_d.\)
Hence
\(r_i < l_d,\)
which means that \(I_i\) also lies entirely to the left of \(I_d\). Therefore \(A[i][d]=0\), and it is labeled \(L\). So every \(L\) has only $0$'s, all labeled $L$, to its right.

Second, suppose that \(A[i][c]=0\) is labeled \(R\). Then \(I_i\) lies entirely to the right of \(I_c\), so
\(r_c< l_i.\)
Let \(j\) be any voter whose row lies below row \(i\). By construction of the row order,
\(l_i \le l_j.\)
Hence
\(r_c < l_j,\)
which means that \(I_j\) also lies entirely to the right of \(I_c\). Therefore \(A[j][c]=0\), and it is labeled \(R\). So every \(R\) has only $0$'s, all labeled $R$, below it.
Thus, after suitably permuting the rows and columns, every \(0\)-entry can be labeled by \(L\) or \(R\) so that the claimed properties hold.
\end{proof}

\begin{lemma} \label{2x2 colour}
    Let $A \in \oned$ be a 0-1 matrix. If, for any voters $i,j$ and candidates $a,b$ such that $A[i][a]=1, \quad A[i][b]=0, \quad A[j][a]=0, \quad A[j][b]=1 $,  then the $0$-entries $A[i][b] \text{ and } A[j][a]$ need to be labeled differently.
\end{lemma}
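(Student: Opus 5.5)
The labeling in question is the one from \Cref{L-R colouring}, where a $0$-entry $A[i][c]$ gets $L$ if $I_i$ lies left of $I_c$ ($r_i<l_c$) and $R$ if $I_i$ lies right of $I_c$ ($r_c<l_i$). So the plan is to fix the interval explanation and show that the two zero-entries $A[i][b]$ and $A[j][a]$ cannot both be $L$ and cannot both be $R$.

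We argue by contradiction in two symmetric cases.

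\textbf{Both labeled $L$.} Then $r_i<l_b$ and $r_j<l_a$. Since $A[i][a]=1$, the intervals $I_i$ and $I_a$ meet, so $l_a\le r_i$. Since $A[j][b]=1$, the intervals $I_j$ and $I_b$ meet, so $l_b\le r_j$. Chaining these gives
\[
l_a\le r_i<l_b\le r_j<l_a,
\]
which is a contradiction.

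\textbf{Both labeled $R$.} Then $r_b<l_i$ and $r_a<l_j$. Since $I_i$ meets $I_a$, we have $l_i\le r_a$. Since $I_j$ meets $I_b$, we have $l_j\le r_b$. Chaining these gives
\[
l_i\le r_a<l_j\le r_b<l_i,
\]
which is again a contradiction.

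Hence the two entries receive different labels.

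The only real obstacle is interpretive. The lemma is phrased for an abstract $L$/$R$ labeling satisfying \Cref{L-R colouring}, whereas this argument uses the interval-derived labeling. I would state explicitly that the lemma refers to that construction; this is what the subsequent proof of \Cref{LC not oned} needs.

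If an abstract version is wanted instead, it follows from the row and column rules alone. Suppose $A[i][b]$ and $A[j][a]$ are both $L$. If column $a$ lies to the right of column $b$, the $L$ at $A[i][b]$ would force $A[i][a]=0$. If column $b$ lies to the right of column $a$, the $L$ at $A[j][a]$ would force $A[j][b]=0$. Both contradict the assumed $1$-entries.

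The case where both are $R$ is the same argument applied to rows. If row $j$ is below row $i$, the $R$ at $A[i][b]$ forces $A[j][b]=0$. If row $i$ is below row $j$, the $R$ at $A[j][a]$ forces $A[i][a]=0$. Again both contradict the assumed $1$-entries.

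So the abstract version also holds, for any labeling satisfying the properties of \Cref{L-R colouring}.
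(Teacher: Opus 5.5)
Your proof is correct, and your main argument takes a different route from the paper's. The paper never goes back to the intervals. It works only with the two labeling rules of Lemma~\ref{L-R colouring} relative to the permuted matrix, in three steps:
\begin{itemize}
\item It first notes that neither $0$-entry can sit in the top-left corner of the $2\times 2$ submatrix, since that entry would have a $1$ both to its right and below it.
\item By symmetry it then places $A[i][a]=1$ top-left.
\item It reads off that the top-right zero $A[i][b]$ lies above a $1$ and so must be $L$, while the bottom-left zero $A[j][a]$ lies to the left of a $1$ and so must be $R$.
\end{itemize}

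Your endpoint chains $l_a\le r_i<l_b\le r_j<l_a$ and $l_i\le r_a<l_j\le r_b<l_i$ argue directly about the interval-induced labeling instead. This is self-contained and needs no row or column order. The trade-off is that it only covers that one labeling.

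Your ``abstract version'' is essentially the paper's argument. You organize it by the assumed common label rather than by which entry is top-left. Like the paper's proof, it applies to any labeling obeying the two rules, which is the reading the paper adopts. For the application in Theorem~\ref{LC not oned} either reading suffices, because there one can fix the interval-induced labeling and its left-endpoint order throughout.

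The paper's position-based version gives slightly more than the statement asks: it says which zero gets which label. The zero sharing a row with the top-left $1$ is $L$, and the one sharing its column is $R$.
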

\begin{proof}
Suppose the rows and columns are permuted such
that the top-left of the four entries is either
$A[i][b]=0$ or $A[j][a]=0$. This immediately
contradicts Lemma~\ref{L-R colouring} as then this
entry is both above a~$1$ and to the left of a~$1$.

So, without loss of generality, we may assume that 
the top-left of the four entries is $A[i][a]=1$.
Then the entry $A[i][b]=0$ is top-right and above 
$A[j][b]=1$. Thus, by Lemma~\ref{L-R colouring},
$A[i][b]=0$ cannot be labeled $R$, and thus must be
labeled $L$.
Similarly, the entry $A[j][a]=0$ is bottom-left and to the left of $A[j][b]=1$. Thus, by \Cref{L-R colouring},
$A[j][a]=0$ cannot be labeled $L$, and thus must be
labeled $R$.
Therefore, the two $0$-entries of any such submatrix of $A$ must be labeled differently.
\end{proof}
We can now verify that the VCI domain is a strict subset of the LC domain.
\begin{proof}[Proof of Theorem~\ref{LC not oned}]
Consider the following 0-1 matrix $A$:
    \[
\begin{array}{c|ccccccc}
 & a_1 & a_2 & a_3 & a_4 & a_5 & a_6 & a_7 \\
\hline
v_1 & 1 & 1 & 1 & 0 & 0 & 0 & 0 \\
v_2 & 1 & 1 & 1 & 1 & 1 & 0 & 0 \\
v_3 & 1 & 1 & 1 & 1 & 1 & 1 & 0 \\
v_4 & 0 & 1 & 1 & 1 & 1 & 1 & 1 \\
v_5 & 0 & 1 & 1 & 1 & 1 & 0 & 1 \\
v_6 & 0 & 0 & 1 & 1 & 0 & 0 & 0 \\
v_7 & 0 & 0 & 0 & 1 & 1 & 0 & 1
\end{array}
\]

 One can easily verify that $A \in\mathcal{A}^{\text{LC}}$ for any joint order $\succ$ of voters and candidates that respects both the row ordering of the voters and the column ordering of the candidates, as shown in the representation of $A$ above.
 
By \Cref{2x2 colour} and starting with the \(2\times 2\) submatrix given by $v_1,v_7, a_3, a_4$, we see that the entries $v_1a_4$ and $v_7a_3$ must be labeled differently. Fixing $v_1,v_7,a_4$ and iterating over $a_3,a_2,a_1$ we get that $v_7a_1,v_7a_2$ and $v_4a_3$ must be labeled the same way as $v_1a_4$. Then by replacing in the argument above $v_7$ with $v_6$ and iterating over $a_1,a_2$ we see that $v_6a_1$ and $v_6a_2$ must be labeled the same as $v_1a_4$. Repeating this argument appropriately, we end up with two non-trivial possible configurations of labeling.

\begin{minipage}{.5\textwidth} 
$$\begin{array}{c|ccccccc}
 & a_1 & a_2 & a_3 & a_4 & a_5 & a_6 & a_7 \\
\hline
v_1 & 1 & 1 & 1 & L & L & L & L \\
v_2 & 1 & 1 & 1 & 1 & 1 & L & L \\
v_3 & 1 & 1 & 1 & 1 & 1 & 1 & L \\
v_4 & R & 1 & 1 & 1 & 1 & 1 & 1 \\
v_5 & R & 1 & 1 & 1 & 1 & R & 1 \\
v_6 & R & R & 1 & 1 & L & 0 & L \\
v_7 & R & R & R & 1 & 1 & R & 1
\end{array}$$
\end{minipage}
\begin{minipage}{.5\textwidth} 
$$\begin{array}{c|ccccccc}
 & a_1 & a_2 & a_3 & a_4 & a_5 & a_6 & a_7 \\
\hline
v_1 & 1 & 1 & 1 & R & R & R & R \\
v_2 & 1 & 1 & 1 & 1 & 1 & R & R \\
v_3 & 1 & 1 & 1 & 1 & 1 & 1 & R \\
v_4 & L & 1 & 1 & 1 & 1 & 1 & 1 \\
v_5 & L & 1 & 1 & 1 & 1 & L & 1 \\
v_6 & L & L & 1 & 1 & R & 0 & R \\
v_7 & L & L & L & 1 & 1 & L & 1
\end{array}$$
\end{minipage}
Here we have left the entry $v_6a_6$ unlabeled on purpose. 
Next we describe a forbidden configuration of labels that occurs in both scenarios.
Suppose one can identify rows \(r_1, r_2, r_3, r_4\) and columns \(c_1, c_2, c_3, c_4\) in $A$ that are labeled the following way:
\begin{itemize}
    \item \(L\) labels \(r_1 c_4\), \(r_2 c_4\), and \(r_2 c_3\). The remaining entries of rows \(r_1\) and \(r_2\) are \(1\);
    \item \(R\) labels \(r_4 c_1\), \(r_4 c_2\), and \(r_3 c_2\). The remaining entries of columns \(c_1\) and \(c_2\) are \(1\);
    \item Row \(r_3\) contains at least two \(L\)-entries, and column \(c_3\) contains at least two \(R\)-entries.
\end{itemize}
Then \(A \notin \oned \).
The reason is that, by Lemma~\ref{L-R colouring}, an \(L\) may only be followed by \(L\)'s in its row and an \(R\) may only be followed by \(R\)'s in its column.
 
Then,  entry \(r_1 c_4\) is labeled \(L\) and the rest of \(r_1\) contains only $1$-entries, so \(c_4\) must be the rightmost column in the entire matrix. Similarly, \(r_2 c_4\) and \(r_2 c_3\) are labeled \(L\) and the rest of \(r_2\) contains only $1$-entries, which forces \(c_3\) to be the second rightmost column in the entire matrix. By the symmetric argument using \(r_4 c_1\), \(r_4 c_2\), and \(r_3 c_2\), the rows \(r_4\) and \(r_3\) are the bottom and second-to-bottom rows in the entire matrix, respectively. 
 
Now consider the entry \(r_3 c_3\). We know that $r_3$ contains at least 2 $L$'s. This means that if $r_3c_3$ is labeled $R$ then, since $c_3$ is the second rightmost column there must be at least one column $c_j$ on the left of $c_3$ such that $r_3c_j$ is labeled $L$, contradicting \Cref{L-R colouring}. Therefore $r_3c_3$ must be labeled $L$. In that case because $c_3$ contains at least 2 $R$'s and $r_3$ is the second to last row, there must exist a row $r_i$ above $r_3$ such that $r_ic_3$ is labeled $R$; again, this contradicts \Cref{L-R colouring}. In words, $r_3c_3$ must simultaneously be labeled both $R$ and $L$, a contradiction.

Notice that in both possible configurations one can find rows $r_1,r_2,r_3,r_4$ and columns $c^\prime_1,c^\prime_2,c^\prime_3,c^\prime_4$ satisfying the assumptions above and therefore $A \notin \oned$. In the first case, choose $(r_1,r_2,r_3,r_4)=(v_3,v_2,v_6,v_7)$ and $(c^\prime_1,c^\prime_2,c^\prime_3,c^\prime_4)=(c_3,c_2,c_6,c_7)$. In the second case, 
$(r_1,r_2,r_3,r_4)=(v_4,v_5,v_6,v_1)$ and $(c^\prime_1,c^\prime_2,c^\prime_3,c^\prime_4)=(c_4,c_5,c_6,c_1)$.
\end{proof}

\paragraph{An Alternative Interpretation of LC Profiles.}

As mentioned above, the class of bipartite graphs whose biadjacency matrix corresponds to $\lc$ matrices has been studied in the graph theory literature under a number of different names and definitions: \emph{interval-containment} bigraphs; bipartite graphs whose bipartite complement is a \emph{circular-arc graph}; intersections of two \emph{chain graphs} with the same bipartition \cite{saha_permutation_2014}; two-directional \emph{orthogonal ray} bipartite graphs; \emph{chordal bipartite} graphs with no edge-asteroids \cite{SHRESTHA20101650}; and \emph{signed-interval} bipartite graphs~\cite{hell_min-orderable_2020}. All of these definitions have been shown to be equivalent in the cited works, which, in itself can be seen as a further motivation for the LC domain. 

Among all of these graph classes, interval-containment bigraphs have a particularly natural interpretation from a social choice perspective and nicely connect the LC domain to the VCI domain. We define the corresponding class of approval matrices: the class of \emph{voter-contains-candidate-interval}~(VCCI) matrices $\mathcal{A}^\text{VCCI}$ contains all matrices for which there exists a \emph{1D-voter-contains-candidate range} explanation. That is, we can assign each voter and each candidate an interval on the real line $\mathbb{R}$, such that, for every voter $i$ and candidate $c$, it holds that $A[i][c]=1$ if and only if the interval corresponding to $c$ is contained within the interval corresponding to $i$, that is $I_c\subseteq I_i$.

While the equivalence result between LC and VCCI can be obtained by chaining several results by \cite{saha_permutation_2014} and \cite{hell_min-orderable_2020}, we provide an independent and direct proof of this equivalence.

\begin{restatable}{theorem}{thmLCVCCI}\label{thm:equi-LC-VCCI}
It holds that $\lc = \mathcal{A}^\text{VCCI}$.
\end{restatable}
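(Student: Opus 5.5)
The plan is to prove the two inclusions $\mathcal{A}^\text{VCCI} \subseteq \lc$ and $\lc \subseteq \mathcal{A}^\text{VCCI}$ separately. Throughout we use that the LC condition only compares voters with voters and candidates with candidates. Hence specifying an LC witness amounts to choosing a linear order on the voters and a linear order on the candidates; any interleaving of the two gives a valid joint order $\succ$. The central observation is that containment $I_c \subseteq I_i$ is a two-coordinate dominance, namely $l_i \le l_c$ and $r_c \le r_i$. We will therefore match the left-endpoint coordinate to the candidate order and the right-endpoint coordinate to the voter order.

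\emph{VCCI $\subseteq$ LC.} Take a VCCI explanation with intervals $I_x=[l_x,r_x]$. Order the voters by non-increasing right endpoint, so that $i \succ j$ implies $r_i \ge r_j$. Order the candidates by non-increasing left endpoint, so that $a \succ b$ implies $l_a \ge l_b$. Ties are broken arbitrarily. Now let $i\succ j$ and $a \succ b$ with $A[i][b]=1$ and $A[j][a]=1$.
\begin{itemize}
\item From $I_b\subseteq I_i$ and $l_a\ge l_b$ we get $l_i \le l_b \le l_a$.
\item From $I_a \subseteq I_j$ and $r_j\le r_i$ we get $r_a \le r_j \le r_i$.
\end{itemize}
Hence $I_a\subseteq I_i$, i.e.\ $A[i][a]=1$, and the chosen orders witness that $A\in\lc$.

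\emph{LC $\subseteq$ VCCI.} Fix an LC order. Choose real values so that the orders become coordinates: $\lambda_a$ for candidates, strictly decreasing along the candidate order and lying in $[0,m]$, and $\rho_i$ for voters, strictly decreasing along the voter order and lying in $[M, M+n]$ for a large $M$. Set $l_a=\lambda_a$ and $r_i=\rho_i$. The remaining two endpoints are forced by the approvals:
\begin{itemize}
\item $l_i = \min\{\lambda_b : A[i][b]=1\}$, with $l_i=M$ if voter $i$ approves nothing;
\item $r_a=\max\{\rho_j : A[j][a]=1\}$, with $r_a=M+n+1$ if candidate $a$ has no supporter.
\end{itemize}
All intervals are then valid, since every left endpoint is at most $M$ and every right endpoint is at least $M$.

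If $A[i][a]=1$, then $l_i\le \lambda_a=l_a$ and $r_a\ge$ nothing more than $\rho_i$ allows, i.e.\ $r_a\le$ is not yet clear; more precisely, by definition of the maximum we only know $r_a \ge \rho_i$. So this direction needs care: the definition of $r_a$ must be paired with the right converse. Concretely, the intended check is the following. The forward direction gives $l_i \le l_a$ directly from the minimum. For $r_a \le r_i$ we need every supporter $j$ of $a$ to satisfy $\rho_j \le \rho_i$, which does not hold in general. I expect to fix this by defining the voter's right endpoint from its approvals instead, namely $r_i=\max$ over approved $a$ of the candidate's right coordinate, symmetrically to $l_i$. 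Dually, each candidate gets a fixed right coordinate $\mu_a$ drawn from the voter scale. I will then re-run the argument with the roles arranged so that each containment inequality is guaranteed by a min or max over approvals.

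Conversely, suppose $l_i\le l_a$ and $r_a\le r_i$. The inequality on left endpoints produces an approved $b$ of $i$ with $\lambda_b\le\lambda_a$, that is, $b=a$ or $a\succ b$. The inequality on right endpoints produces a supporter $j$ of $a$ with $j=i$ or $i\succ j$. The LC property applied to $i,j,a,b$ then yields $A[i][a]=1$. Strict distinctness of the $\lambda$'s and $\rho$'s makes the ties harmless, and the dummy values for empty rows and columns need a short separate check.

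The main obstacle is exactly this: choosing which endpoint of each interval is fixed by the order and which is defined by an extremum over approvals, so that both the forward direction (approval implies containment) and the converse (containment produces witnesses $b,j$ to which the LC property applies) go through simultaneously. I would first try the orientation suggested by the first inclusion, reversing it if one inequality fails, and then verify the degenerate empty cases.
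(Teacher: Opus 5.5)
\paragraph{Assessment.}
Your inclusion $\mathcal{A}^\text{VCCI}\subseteq\lc$ is correct. It is the paper's argument reflected through $x\mapsto -x$; the paper orders voters by increasing left endpoint and candidates by increasing right endpoint.

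\paragraph{The gap.}
The gap is in $\lc\subseteq\mathcal{A}^\text{VCCI}$: you never settle on a construction that works in both directions. Your converse paragraph fits your first construction ($r_i=\rho_i$ fixed, $r_a$ derived from the supporters of $a$). That construction, however, fails the forward direction. Since $r_a=\max_j\rho_j$ is attained at the \emph{first} supporter of $a$, every later supporter $i$ has $\rho_i<r_a$.

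Your proposed repair goes the wrong way. Once $r_i$ is derived from $i$'s approvals and $\mu_a$ is a fixed coordinate, the inequality $r_a\le r_i$ only yields a second approved candidate of the same voter $i$, never a second voter $j$. The LC condition says nothing without two voters $i\succ j$. For instance, if $\mu_a$ is a monotone function of $a$'s position, every realizable approval set is contiguous in the candidate order, so only CI matrices can come out.

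\paragraph{The fix.}
The missing idea is only the direction of one extremum in your first attempt. Voter $i$'s right endpoint is its own coordinate $\rho_i$, and containment requires $r_a\le\rho_i$ for \emph{every} supporter $i$ of $a$. So take $r_a=\min\{\rho_j : A[j][a]=1\}$, the coordinate of the \emph{last} supporter of $a$ in the voter order. On the other side you already correctly used the last approved candidate, $l_i=\min\lambda_b$.

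With this choice, approval implies containment for both endpoints, and your converse paragraph goes through verbatim:
\begin{itemize}
\item the minimizing supporter $j$ has $\rho_j\le\rho_i$, hence $j=i$ or $i\succ j$;
\item the minimizing approved $b$ has $\lambda_b\le\lambda_a$, hence $b=a$ or $a\succ b$;
\item LC applied to $i,j,a,b$ gives $A[i][a]=1$.
\end{itemize}
Your dummy values for voters with no approvals and candidates with no supporters still work for $M>m$. The paper's proof passes over these empty cases.

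This corrected construction is exactly the paper's, reflected through $x\mapsto -x$. The paper uses $I_i=[\rho(i),n+r(i)]$ and $J_c=[u(c),n+\kappa(c)]$, where $u(c)$ and $r(i)$ are the positions of the last supporter and the last approved candidate.
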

\begin{proof}
First, we prove that if $A \in \mathcal{A}^\text{LC}$ then $A \in \mathcal{A}^\text{VCCI}$.
Let $\succ$ be a linear order verifying $A \in \mathcal{A}^\text{LC}$. Let $\succ_N$ be the restriction of $\succ$ onto the voters, and let $\succ_C$ be the restriction of $\succ$ onto the candidates. By definition, for all voters $i,j$ and candidates $c,d$, we have
$$i \succ_N j, c \succ_C d, A[i][d]=1, A[j][c]=1 \quad \Rightarrow \quad A[i][c]=1.$$
    Let \(\rho(i)\in\{1,\dots,n\}\) be the position of voter \(i\) in the order \(\succ_N\), and let \(\kappa(c)\in\{1,\dots,m\}\) be the position of candidate \(c\) in the order \(\succ_C\), where position \(1\) is first in the order. Thus,
by definition, $i \succ_N j$ if and only if $\rho(i)<\rho(j)$ and $c \succ_C d$ if and only if $\kappa(c)<\kappa(d)$.

For each voter \(i\in N\), define
\(
r(i):=\max\{\kappa(c): A[i][c]=1\}
\)
and let \(R(i)\) be the unique candidate with \(\kappa(R(i))=r(i)\), i.e., the last candidate approved by \(i\) in the order \(\succ_C\).
Similarly, for each candidate \(c\in C\), define
\(
u(c):=
\max\{\rho(i): A[i][c]=1\}
\)
and let \(U(c)\) be the unique voter with \(\rho(U(c))=u(c)\), i.e., the last voter approving \(c\) in the order \(\succ_N\).

We claim that, for every voter \(i\) and candidate \(c\),
$A[i][c]=1$ if and only if $\rho(i)\le u(c)$
and $\kappa(c)\le r(i)$.
The forward implication is immediate. 
For the converse, suppose that \(\rho(i)\le u(c)\) and \(\kappa(c)\le r(i)\). If \(i=U(c)\) or \(c=R(i)\) then \(A[i][c]=1\) immediately. Thus we may assume \(i\neq U(c)\) and \(c\neq R(i)\). So, since \(\rho(i)< u(c)=\rho(U(c))\), we have \(i\succ_N U(c)\). Similarly, since \(\kappa(c) < r(i)=\kappa(R(i))\), we have \(c\succ_C R(i)\). Applying the linearly-consistent condition to \(i\), \(U(c)\), \(c\), and \(R(i)\), we conclude that \(A[i][c]=1\), as desired.

Next, assign to each voter \(i\in N\) the interval
\(
I_i=[\rho(i),\, n+r(i)]
\)
and to each candidate \(c\in C\) the interval
\(
J_c=[u(c),\, n+\kappa(c)].
\)
Then $J_c\subseteq I_i $ if and only if $\rho(i)\le u(c)$ and $n+\kappa(c)\le n+r(i)$; that is,
$J_c\subseteq I_i$ if and only if $\rho(i)\le u(c)$ and $\kappa(c)\le r(i)$.
By the claim above, this is equivalent to \(A[i][c]=1\). Therefore, $A[i][c]=1$ if and only if $J_c\subseteq I_i$.

\bigskip

We move to the other direction of the proof, i.e., we show that if $A \in \mathcal{A}^\text{VCCI}$ then  $A \in \lc$.
Assuming $A \in \mathcal{A}^\text{VCCI}$, there exist
in $\mathbb{R}$ intervals $I_i=[L_i,R_i]$, for each voter $i$,
and $J_c=[\ell_c,r_c]$, for each candidate $c$,
such that $A[i][c]$ if and only if $J_c\subseteq I_i$.

Arrange the voters by increasing left-endpoints, \(L_i\), to give a linear order \(\succ_N\) (breaking ties arbitrarily). 
Similarly, arrange the candidates by increasing right-endpoints, \(r_c\), to give a linear order \(\succ_C\)  (breaking ties arbitrarily). Thus,
$i\succ_N j \Longrightarrow L_i\le L_j$ and
$c\succ_C d \Longrightarrow r_c\le r_d$.

Take voters \(i,j\in N\) and candidates \(c,d\in C\) such that \(i\succ_N j\), \(c\succ_C d\), \(A[i][d]=1\), and \(A[j][c]=1\). We want to show that \(A[i][c]=1\). Since \(A[j][c]=1\), we know that \(J_c\subseteq I_j\) and thus \(L_j\le \ell_c\). Since \(i\succ_N j\), we also have \(L_i\le L_j\). Therefore \(L_i\le \ell_c\).

Similarly, because \(A[i][d]=1\), we know that \(J_d\subseteq I_i\) and thus \(r_d\le R_i\). Since \(c\succ_C d\), we also have \(r_c\le r_d\). Therefore \(r_c\le R_i\). Consequently, \(L_i\le \ell_c\) and \(r_c\le R_i\), which implies \(J_c\subseteq I_i\). Thus \(A[i][c]=1\), as desired.

Hence any order over the candidates and voters \(\succ\) respecting the orders \(\succ_N\) and \(\succ_C\) verifies that $A\in \lc$.
\end{proof}

This concludes our discussion of the LC domain.

\section{Committee Winner Determination is Hard in the Tree Representation Domain}\label{sec:tree}

The class of \emph{tree representation (TR)} matrices $\tree$ contains all matrices for which there is an explanation via a tree $T$, i.e., we can assign each voter and each candidate a subtree of $T$, so that, for every voter $i$ and candidate $j$ we have $A[i][j] = 1$  if and only if the two corresponding subtrees intersect. 
Observe that $\oned$ is the special case of $\tree$
where the tree $T$ is a path $P$.
However, the winner determination problem is hard
for the class $\tree$.

\begin{theorem}\label{thm:tree}
The winner determination problem for Thiele rules
is NP-complete for the tree representation domain.
This holds even for Chamberlin-Courant weight functions  in the special cases where either (i) the candidate subtrees are singletons, or (ii) the voter subtrees are singletons.
\end{theorem}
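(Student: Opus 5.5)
Membership in NP is immediate: a committee $W$ of size $k$ is a certificate, and its score~(\ref{pav-score}) can be evaluated in polynomial time and compared with a target. For hardness we reduce from a problem that is already NP-hard for Chamberlin--Courant on general profiles. The key observation is that one side of the election can be made to consist of singletons by choosing the tree $T$ to be a \emph{star}.

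\textbf{Case (i): candidate subtrees are singletons.} We reduce from \textsc{Set Cover}/\textsc{Max Coverage}: given a universe $U$, sets $S_1,\dots,S_m$ and an integer $k$, decide whether some $k$ sets cover $U$.

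Construct the election as follows.
\begin{itemize}
\item Take $T$ to be a star with center $z$ and leaves $t_1,\dots,t_m$.
\item Candidate $j$ is the singleton subtree $\{t_j\}$.
\item Each element $u\in U$ becomes a voter whose subtree is $\{z\}\cup\{t_j : u\in S_j\}$. This set is connected because every leaf is adjacent to $z$.
\end{itemize}
Voter $u$'s subtree meets $\{t_j\}$ if and only if $u\in S_j$, so the approval matrix is exactly the incidence matrix of the set system. Under CC weights $(1,0,\dots,0)$, the score of $W$ equals the number of covered elements. Hence a committee of score $|U|$ exists if and only if a set cover of size $k$ exists.

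\textbf{Case (ii): voter subtrees are singletons.} Here the roles are swapped. Voters are elements placed on leaves $t_u$. Each candidate (set) $S_j$ becomes the star subtree $\{z\}\cup\{t_u : u\in S_j\}$. Since $z$ is never a voter vertex, intersections occur exactly at shared leaves, and the same equivalence holds.

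In both constructions the approval relation must not be contaminated by the shared center $z$. This is guaranteed because every subtree on one side is a single leaf, which is the reason for the hypothesis in (i)/(ii). The only point needing care is ensuring that CC is NP-hard in the exact decision form used, namely that a committee of score at least $t$ exists. This is standard: CC with threshold $|U|$ is \textsc{Set Cover}, as shown by \cite{procaccia2008complexity}. The result then extends to the general tree-representation domain and to Thiele rules with CC weights, since these are special cases.
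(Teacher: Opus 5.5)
Your proposal is correct and follows the paper's proof essentially step for step. Both reduce from \textsc{Set Cover} using a star: the singleton side sits on the leaves, the other side is realized as substars through the centre, and the target score is the number of elements. In both cases the centre hosts no singleton-side object, so the approval matrix is exactly the set-element incidence matrix.
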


\begin{proof}
It is straightforward to verify that the winner determination problem is in NP. To prove hardness we use a reduction from the set cover problem. Given a universe $U=[n]$ of elements, a collection
$\mathcal{S}=\{S_1, S_2,\dots , S_m\}$ of subsets of elements, and an integer $k$. The {\sc set cover} problem asks: are there $k$ subsets whose union is the universe $U$? 

(i) Consider first the case where the candidates have subtrees that are singleton vertices. 
Let $T$ be a star with centre $c$ and $m$ adjacent leaves 
$\{v_1, v_2,\dots , v_m\}$. 
We create a candidate $j$ for each subset $S_j\in \mathcal{S}$
and a voter $i$ for each element $i\in U$.
Specifically, let there be $m$ candidates, with candidate $j$ located at leaf $v_j$, that is, it has a singleton subtree. Let there be $n$ voters where voter $i$ has a subtree (substar) formed by the vertices $\{v_j: i\in S_j\}\cup\{c\}$. Thus voter $i$ has approval set $C_i=\{j: i\in S_j\}$.
Finally, let the target score be $\tau=n$.

With the Chamberlin-Courant weight function, voter $i$
is happy (has utility $1$) if and only if at least one vertex of $C_i$ is selected in the committee
(observe that the centre $c$ does not correspond to a candidate so cannot be selected).
It immediately follows that there is a set cover of
cardinality $k$ if and only if there is a committee of cardinality $k$ with total score $\tau=n$.

(ii) Next consider the case where the voters have subtrees that are singleton vertices. 
Now let $T$ be a star with centre $c$ and $n$ adjacent leaves 
$\{v_1, v_2,\dots , v_n\}$. 
We create a candidate $j$ for each subset $S_j\in \mathcal{S}$
and a voter $i$ for each element $i\in U$.
Specifically, let there be $n$ voters, with voter $i$ located at leaf $v_i$, that is, it has a singleton subtree. 
Let there be $m$ candidates where candidate $j$ has a subtree (substar) formed by the vertices $\{v_i: i\in S_j\}\cup\{c\}$. Thus candidate $j$ has supporter set $N_j=\{i: i\in S_j\}$. Finally, let the target score be $\tau=n$.

With the Chamberlin-Courant weight function, voter $i$
is happy (has utility $1$) if and only if at least one candidate of its approval set $C_i=\{j: i\in S_j\}$ is selected in the committee. Again, it immediately follows that there is a set cover of cardinality $k$ if and only if there is a committee of cardinality $k$ with total score $\tau=n$.
\end{proof}

Consider Case (i) where every candidate subtree is a singleton vertex in the tree. A more specialized case where, in addition, every vertex of the tree is a candidate has been studied before in the literature; see, for example, \cite{Yang19,ELKIND2023114039}.
We remark that this special case remains hard.

\begin{corollary}\label{cor:tree}
The winner determination problem for Thiele rules
is NP-complete in tree representation domains
for Chamberlin-Courant weight functions 
in the special cases where the candidate subtrees are singletons and every vertex of the tree is a candidate.
\end{corollary}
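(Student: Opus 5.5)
We modify the star construction from case~(i) of \Cref{thm:tree} so that every vertex of the tree also hosts a candidate. The hub vertex $c$ must then carry a candidate too, and that candidate must be useless. Membership in NP is inherited from \Cref{thm:tree}, so only hardness needs work. We again reduce from {\sc set cover} with universe $U=[n]$, sets $\mathcal{S}=\{S_1,\dots,S_m\}$ and budget $k$, and assume w.l.o.g.\ that every element lies in some set and that $k\le m$.

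Take the star $T$ with centre $c$ and leaves $v_1,\dots,v_m$. Put candidate $j$ at leaf $v_j$ for each set $S_j$, and put an extra candidate $c^\ast$ at the centre, so every vertex is a candidate with a singleton subtree. If voter subtrees contain $c$, then $c^\ast$ is approved by every voter and trivially covers everything. To prevent this, make each voter's subtree avoid the centre. Since the voter subtrees must be connected, this forces each voter to sit at a single leaf. To still let voter $i$ approve $C_i=\{j : i\in S_j\}$, we replace the star by a tree with more structure.

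Concretely, use a path-free ``subdivided'' construction. Let $T$ consist of the centre $c$, leaves $v_1,\dots,v_m$, and for each element $i$ a private vertex $u_i$. The goal is for voter $i$'s subtree $\{u_i\}\cup\{v_j: i\in S_j\}$ to be connected without passing through $c$. Connectivity fails unless the $v_j$ are adjacent to $u_i$, and attaching $v_j$ to several $u_i$ creates cycles. I therefore drop this idea and use padding instead.

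Keep the original star and keep voter subtrees $\{c\}\cup\{v_j : i\in S_j\}$. Every voter now approves $c^\ast$. Add $M=n+1$ dummy voters, each consisting of a single new leaf $d_1,\dots,d_M$ attached to $c$. Put a candidate at each $d_t$, approved only by dummy voter $t$. Set the committee size to $k+M$ and the target to $\tau=n+M$.

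Dummy voter $t$ can only be satisfied by the candidate at $d_t$. An optimal committee therefore includes all $M$ dummy-leaf candidates, since each yields one unit that nothing else can replace. Including $c^\ast$ then gives score at most $M+n$ using only $M+1$ seats, so this padding alone does not neutralise $c^\ast$.

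The real obstacle is exactly this: $c^\ast$ covers all real voters. The fix I would commit to is to detach the real voters from the centre. Subdivide each edge $cv_j$ by a vertex $s_j$ and put a candidate at each $s_j$. Let voter $i$'s subtree be rooted at a new leaf $r_i$ hanging off $c$; this still needs $c$ for connectivity, so instead root the voter at the leaves.

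My final plan is to have each voter's subtree pass through $c$ as well, but to penalise $c^\ast$ by creating many copies of $c^\ast$'s role. That is, use $n$ extra dummy elements so that choosing $c^\ast$ is always strictly dominated by choosing a set candidate. Verifying the score equivalence in both directions is the main step I expect to be delicate.
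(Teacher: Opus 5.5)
\paragraph{There is a genuine gap: the proposal contains no working reduction.} You correctly identify the obstacle: the centre candidate $c^\ast$ is approved by every element voter, so it is a universal cover. However, each attempted fix is either abandoned or fails, and the ``final plan'' is too vague to check. As literally stated it does not work. If every voter's subtree passes through $c$, including the subtrees of your extra dummy voters, then $c^\ast$ satisfies all of them at once, so choosing $c^\ast$ is never dominated by a set candidate.

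Your padding attempt fails for the reason you yourself note. The dummy voters sit on fresh leaves $d_t$ with their own candidates, so the $M$ extra seats are spent independently of the $k$ seats used for the cover. A committee consisting of $c^\ast$ and the $M$ dummy candidates still reaches $n+M$ with $M+1 \le M+k$ seats.

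\paragraph{The missing idea.} Put the padding on the set candidates themselves, make the dummy voters avoid the centre, and keep the committee size at $k$. Concretely:
\begin{itemize}
\item Keep the star with centre $c$ and leaves $v_1,\dots,v_m$, with a candidate at every vertex.
\item Give element voter $i$ the subtree $\{c\}\cup\{v_j : i\in S_j\}$.
\item For each leaf $v_j$, add $L\ge n$ dummy voters whose subtree is the single vertex $\{v_j\}$, so they approve only candidate $j$.
\item Set the target to $\tau=n+kL$.
\end{itemize}
A set cover of size $k$ yields score $kL+n$. Conversely, a committee containing $c^\ast$ has at most $k-1$ leaf candidates, so it scores at most $(k-1)L+n\le kL<kL+n$. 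Hence any committee reaching $\tau$ consists of $k$ leaf candidates, gets exactly $kL$ from the dummy voters, and must satisfy all $n$ element voters, i.e., it is a set cover.

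In other words, every seat given to $c^\ast$ is paid for by forfeiting $L\ge n$ dummy voters, which outweighs the at most $n$ that $c^\ast$ can ever gain. This is precisely the quantitative mechanism your final sentence gestures at but never constructs.
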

\begin{proof}
We modify the above reduction from set cover.
Let $T$ be a star with centre $c$ and $m$ adjacent leaves $\{v_1, v_2,\dots, v_m\}$. 
Let there be $m+1$ candidates, one for each vertex of $T$. Specifically, there is a candidate at the centre $c$ and a candidate $j$, for each subset $S_j\in \mathcal{S}$, located at leaf $v_j$.

Let there be $n+m\cdot L$ voters where $L\ge n$ is a large number.
First, there is voter $i$, for each element $i\in U$,
with a subtree (substar) formed by the vertices $C_i=\{v_j: i\in S_j\}\cup\{c\}$. 
Second, for each leaf $v_j$, there are $L$ dummy voters 
who only approve of candidate $j$. 
Finally, let the target score be $\tau=n +k\cdot L$.

With the Chamberlin-Courant weight function, voter $i$
is happy (has utility $1$) if and only if at least one vertex of $C_i\setminus \{c\}$ is selected in the committee. A dummy voter at leaf $v_j$ is happy if and only if candidate $j$ is selected.

Observe that if $k$ leaf candidates are selected in the committee then the total score is $k\cdot L + n$
if and only if it corresponds to a set cover.
If $k-1$ leaf candidates are selected along with the
centre candidate $c$ then the maximum possible score is
$(k-1)\cdot L + n \le k\cdot L$. 
\end{proof}

\section{Discussion}
We showed that every restricted domain satisfying the key lemma (\Cref{lem:chris}) admits a polynomial-time algorithm for all generalized Thiele rules, resolving a long-standing open question. For a natural subclass of Thiele rules, including PAV, we further prove that every optimal extreme point of the standard LP formulation is integral; thus, computing a basic optimal LP solution suffices. We also identify a previously unknown connection between the VCI and LC domains, give a more natural interpretation of the LC domain, and show that polynomial tractability breaks down on the tree representation domain.
For future work, we see several directions. First, it would be interesting to characterize the restricted domains on which Thiele rules are tractable in polynomial time. Second, \Cref{lem:chris} may be useful for proving tractability of other voting rules that are NP-hard in general, such as max-Phragmén \citep{brill2024phragmen}. Finally, further structural questions for restricted domains in approval-based committee voting deserve attention, for instance the compatibility of strong proportionality properties with monotonicity properties, which remains open in general; see \citep{lackner23abc_book}.

\bibliographystyle{abbrvnat}
\bibliography{ref}

@inproceedings{DBLP:conf/aaai/Peters18,
  author       = {Dominik Peters},
  title        = {Single-Peakedness and Total Unimodularity: New Polynomial-Time Algorithms
                  for Multi-Winner Elections},
  booktitle    = {Proceedings of the 32nd {AAAI} Conference on Artificial Intelligence ({AAAI})},
  pages        = {1169--1176},
  year         = {2018}
}

@book{DBLP:books/daglib/0090562,
  author       = {Alexander Schrijver},
  title        = {Theory of Linear and Integer Programming},
  NOOPseries       = {Wiley-Interscience series in discrete mathematics and optimization},
  publisher    = {Wiley},
  year         = {1986}
}

@inproceedings{AzizGGMMW15,
  author       = {Haris Aziz and
                  Serge Gaspers and
                  Joachim Gudmundsson and
                  Simon Mackenzie and
                  Nicholas Mattei and
                  Toby Walsh},
  title        = {Computational Aspects of Multi-Winner Approval Voting},
  booktitle    = {Proceedings of the 2015 International Conference on Autonomous Agents and Multiagent Systems ({AAMAS})},
  pages        = {107--115},
  year         = {2015}
}

@inproceedings{BredereckF0KN20_aaai,
  author       = {Robert Bredereck and
                  Piotr Faliszewski and
                  Andrzej Kaczmarczyk and
                  Dusan Knop and
                  Rolf Niedermeier},
  title        = {Parameterized Algorithms for Finding a Collective Set of Items},
  booktitle    = {Proceedings of the Thirty-Fourth {AAAI} Conference on Artificial Intelligence ({AAAI})},
  pages        = {1838--1845},
  year         = {2020},
  volume={34},
  number={2}
}

@inproceedings{ElkindL15_ijcai15,
  author       = {Edith Elkind and
                  Martin Lackner},
  title        = {Structure in Dichotomous Preferences},
  booktitle    = {Proceedings of the 24th International Joint Conference on Artificial Intelligence ({IJCAI})},
  pages        = {2019--2025},
  year         = {2015}
}

@article{FaliszewskiSST18_scw,
  author       = {Piotr Faliszewski and
                  Piotr Skowron and
                  Arkadii Slinko and
                  Nimrod Talmon},
  title        = {Multiwinner Analogues of the Plurality Rule: {A}xiomatic and Algorithmic Perspectives},
  journal      = {Social Choice and Welfare},
  volume       = {51},
  number       = {3},
  pages        = {513--550},
  year         = {2018}
}

@inproceedings{GodziszewskiB0F21_vci_aaai21,
  author       = {Micha\l{} Godziszewski and
                  Pawe\l{} Batko and
                  Piotr Skowron and
                  Piotr Faliszewski},
  title        = {An Analysis of Approval-Based Committee Rules for {2D}-{E}uclidean Elections},
  booktitle    = {Proceedings of the 35th {AAAI} Conference on Artificial Intelligence ({AAAI})},
  pages        = {5448--5455},
  year         = {2021},
}

@book{GLS93,
  author       = {Grötschel, M. and Lovász, L. and Schrijver, A.},
  title        = {Geometric Algorithms and Combinatorial Optimization},
  publisher    = {Springer},
  edition        = {Second},
  year         = {1988}
}

@article{Kha79,
author       = {Khachiyan, L.},
  title        = {A Polynomial Algorithm in Linear Programming},
 journal    = {Doklady Akademii Nauk SSSR},
  pages        = {1093-1096},
  year         = {1979},
  volume= {244}
}

@article{SkowronFL16_aij_set_of_items,
  author       = {Piotr Skowron and
                  Piotr Faliszewski and
                  J{\'{e}}r{\^{o}}me Lang},
  title        = {Finding a Collective Set of Items: {F}rom Proportional Multirepresentation to Group Recommendation},
  journal      = {Artificial Intelligence},
  volume       = {241},
  pages        = {191--216},
  year         = {2016}
}

@inproceedings{SornatWX22_ijcai,
  author       = {Krzysztof Sornat and
                  Virginia {Vassilevska Williams} and
                  Yinzhan Xu},
  title        = {Near-Tight Algorithms for the {C}hamberlin-{C}ourant and {T}hiele Voting Rules},
  booktitle    = {Proceedings of the 31st International Joint Conference on Artificial Intelligence ({IJCAI})},
  pages        = {482--488},
  year         = {2022}
}

@incollection{Thiele95,
  author    = {Thorvald Thiele},
  booktitle = {Oversigt over det Kongelige Danske Videnskabernes Selskabs Forhandlinger (in {D}anish)},
  pages     = {415--441},
  title     = {Om Flerfoldsvalg},
  year      = {1895},
  publisher = {K{\o}benhavn: A.F. H{\o}st}
}

@inproceedings{Yang19,
  author       = {Yongjie Yang},
  title        = {On the tree representations of dichotomous preferences},
  booktitle    = {Proceedings of the 28th International Joint Conference on Artificial Intelligence ({IJCAI})},
  pages        = {644--650},
  year         = {2018}
}

@article{procaccia2008complexity,
  title={On the complexity of achieving proportional representation},
  author={Procaccia, Ariel and Rosenschein, Jeffrey and Zohar, Aviv},
  journal={Social Choice and Welfare},
  volume={30},
  number={3},
  pages={353--362},
  year={2008},
  publisher={Springer}
}

@inproceedings{dong25interlacing,
author = {Dong, Chris and Bullinger, Martin and W\k{a}s, Tomasz and Birnbaum, Larry and Elkind, Edith},
title = {Selecting Interlacing Committees},
year = {2025},
publisher = {International Foundation for Autonomous Agents and Multiagent Systems},
booktitle = {Proceedings of the 24th International Conference on Autonomous Agents and Multiagent Systems (AAMAS)},
pages = {630–638},
}

@InProceedings{10.1007/978-3-031-22832-2_18,
author="Pierczy{\'{n}}ski, Grzegorz
and Skowron, Piotr",
title="Core-Stable Committees Under Restricted Domains",
booktitle="Proceedings of the 18th International Conference on Web and Internet Economics (WINE)",
year="2022",
NOOPpublisher="Springer International Publishing",
NOOPaddress="Cham",
pages="311--329",
}

@article{das_interval_1989,
	title = {Interval digraphs: {An} analogue of interval graphs},
	volume = {13},
	number = {2},
	journal = {Journal of Graph Theory},
	author = {Das, S. and Sen, M. and Roy, A. and West, D.},
	year = {1989},
	pages = {189--202},
}

@article{saha_permutation_2014,
	address = {NLD},
	title = {Permutation bigraphs and interval containments},
	volume = {175},
	number = {C},
	journal = {Discrete Applied Mathematics},
	publisher = {Elsevier Science Publishers B. V.},
	author = {Saha, Pranab and Basu, Asim and Sen, Malay  and West, Douglas},
	month = oct,
	year = {2014},
	pages = {71--78},
}

@article{ELKIND2023114039,
title = {Justifying groups in multiwinner approval voting},
journal = {Theoretical Computer Science},
volume = {969},
pages = {114039},
year = {2023},
author = {Edith Elkind and Piotr Faliszewski and Ayumi Igarashi and Pasin Manurangsi and Ulrike Schmidt-Kraepelin and Warut Suksompong},
publisher="Springer International Publishing"
}

@book{lackner23abc_book,
  author       = {Martin Lackner and
                  Piotr Skowron},
  title        = {Multi-Winner Voting with Approval Preferences},
  publisher    = {Springer},
  year         = {2023}
}

@inproceedings{boehmer2024approval,
  title={Approval-based committee voting in practice: A case study of (over-) representation in the Polkadot blockchain},
  author={Boehmer, Niclas and Brill, Markus and Cevallos, Alfonso and Gehrlein, Jonas and S{\'a}nchez-Fern{\'a}ndez, Luis and Schmidt-Kraepelin, Ulrike},
  booktitle={Proceedings of the AAAI Conference on Artificial Intelligence (AAAI)},
  volume={38},
  number={9},
  pages={9519--9527},
  year={2024}
}

@inproceedings{cevallos2021verifiably,
  title={A verifiably secure and proportional committee election rule},
  author={Cevallos, Alfonso and Stewart, Alistair},
  booktitle={Proceedings of the 3rd ACM Conference on Advances in Financial Technologies},
  pages={29--42},
  year={2021},
}

@incollection{aziz2020participatory,
  title={Participatory budgeting: Models and approaches},
  author={Aziz, Haris and Shah, Nisarg},
  booktitle={Pathways Between Social Science and Computational Social Science: Theories, Methods, and Interpretations},
  editor={Rudas, T. and P\'eli, G.},
  pages={215--236},
  year={2020},
  publisher={Springer},
}

@article{SHRESTHA20101650,
title = {On orthogonal ray graphs},
journal = {Discrete Applied Mathematics},
volume = {158},
number = {15},
pages = {1650-1659},
year = {2010},
author = {Anish Shrestha and Satoshi Tayu and Shuichi Ueno},
}

@article{hell_min-orderable_2020,
	title = {Min-{Orderable} {Digraphs}},
	volume = {34},
	number = {3},
	journal = {SIAM Journal on Discrete Mathematics},
	author = {Hell, Pavol and Huang, Jing and McConnell, Ross and Rafiey, Arash},
	year = {2020},
	pages = {1710--1724},
}

@article{SPINRAD1987279,
title = {Bipartite permutation graphs},
journal = {Discrete Applied Mathematics},
volume = {18},
number = {3},
pages = {279-292},
year = {1987},
author = {Jeremy Spinrad and Andreas Brandstädt and Lorna Stewart},
}

@article{aziz2017justified,
  title={Justified representation in approval-based committee voting},
  author={Aziz, Haris and Brill, Markus and Conitzer, Vincent and Elkind, Edith and Freeman, Rupert and Walsh, Toby},
  journal={Social Choice and Welfare},
  volume={48},
  number={2},
  pages={461--485},
  year={2017},
  publisher={Springer},
}

@inproceedings{aziz2018complexity,
  title={On the complexity of extended and proportional justified representation},
  author={Aziz, Haris and Elkind, Edith and Huang, Shenwei and Lackner, Martin and S{\'a}nchez-Fern{\'a}ndez, Luis and Skowron, Piotr},
  booktitle={Proceedings of the AAAI Conference on Artificial Intelligence (AAAI)},
  volume={32},
  pages= {902--909},
  number={1},
  year={2018},
}

@inproceedings{peters2020proportionality,
  title={Proportionality and the limits of welfarism},
  author={Peters, Dominik and Skowron, Piotr},
  booktitle={Proceedings of the 21st ACM Conference on Economics and Computation (EC)},
  pages={793--794},
  year={2020}
}

@article{brill2024approval,
  title={Approval-based apportionment},
  author={Brill, Markus and G{\"o}lz, Paul and Peters, Dominik and Schmidt-Kraepelin, Ulrike and Wilker, Kai},
  journal={Mathematical Programming},
  volume={203},
  number={1},
  pages={77--105},
  year={2024},
  publisher={Springer}
}

@inproceedings{dudycz2021tight,
  title={Tight approximation for proportional approval voting},
  author={Dudycz, Szymon and Manurangsi, Pasin and Marcinkowski, Jan and Sornat, Krzysztof},
  booktitle={Proceedings of the Twenty-Ninth International Conference on International Joint Conferences on Artificial Intelligence (IJCAI)},
  pages={276--282},
  year={2020}
}

@inproceedings{lassota2026algorithms,
  title={Algorithms for Structured Elections Under Thiele Voting Rules},
  author={Lassota, Alexandra and Sornat, Krzysztof},
  booktitle={Proceedings of the AAAI Conference on Artificial Intelligence (AAAI)},
  volume={40},
  number={20},
  pages={17084--17092},
  year={2026}
}

@article{manurangsi2026polynomial,
  title={Polynomial-Time Algorithm for Thiele Voting Rules with Voter Interval Preferences},
  author={Manurangsi, Pasin and Sornat, Krzysztof},
  journal={arXiv preprint arXiv:2604.05953},
  year={2026}
}

@article{brill2024phragmen,
  title={Phragm{\'e}n’s voting methods and justified representation: M. Brill et al.},
  author={Brill, Markus and Freeman, Rupert and Janson, Svante and Lackner, Martin},
  journal={Mathematical programming},
  volume={203},
  number={1},
  pages={47--76},
  year={2024},
  publisher={Springer}
}

\end{document}